\documentclass{article}

\usepackage{microtype}
\usepackage{graphicx}
\usepackage{subcaption}
\usepackage{booktabs}
\usepackage{hyperref}

\usepackage[accepted]{icml2026}

\usepackage{amsmath}
\usepackage{amssymb}
\usepackage{mathtools}
\usepackage{amsthm}

\usepackage{multirow}
\usepackage{makecell}
\usepackage{xcolor}
\usepackage{colortbl}

\usepackage[capitalize,noabbrev]{cleveref}

\theoremstyle{plain}
\newtheorem{theorem}{Theorem}[section]
\newtheorem{proposition}[theorem]{Proposition}

\theoremstyle{definition}

\theoremstyle{remark}

\usepackage[textsize=tiny]{todonotes}

\icmltitlerunning{\textsc{Slash} the Sink: Sharpening Structural Attention Inside LLMs}

\begin{document}

\twocolumn[
  \icmltitle{\textsc{Slash} the Sink: Sharpening Structural Attention Inside LLMs}

  \icmlsetsymbol{equal}{*}

  \begin{icmlauthorlist}
    \icmlauthor{Yiming Liu}{sjtu}
    \icmlauthor{Bin Lu*}{sjtu}
    \icmlauthor{Xinbing Wang}{sjtu}
    \icmlauthor{Chenghu Zhou}{igsnrr}
    \icmlauthor{Meng Jin*}{sjtu}
  \end{icmlauthorlist}

  \icmlaffiliation{sjtu}{Shanghai Jiao Tong University, Shanghai, China}
  \icmlaffiliation{igsnrr}{Institute of Geographical Science and Natural Resources Research, Chinese Academy of Sciences, Beijing, China}

  \icmlcorrespondingauthor{Bin Lu}{robinlu1209@sjtu.edu.cn}
  \icmlcorrespondingauthor{Meng Jin}{jinm@sjtu.edu.cn}

  \icmlkeywords{Machine Learning, ICML}
  \icmlkeywords{Large Language Models, Graph Reasoning, Attention Mechanism}

  \vskip 0.3in
]

\printAffiliationsAndNotice{}

\begin{abstract}
Large Language Models (LLMs) show remarkable semantic understanding but often struggle with structural understanding when processing graph topologies in a serialized format. Existing solutions rely on training external graph-based adapters or fine-tuning, which incur high costs and lost generalizability.
In this work, we investigate the internal mechanisms of LLMs and present a critical finding: \textit{LLMs spontaneously reconstruct the graph's topology internally}, evidenced by a distinct ``sawtooth'' pattern in their attention maps that structurally aligns with the ``token-level adjacency matrix''. However, this intrinsic structural understanding is diluted by the attention sink. We theoretically formalize this dilution as a representation bottleneck, stemming from a fundamental conflict: the model's anisotropic bias, essential for language tasks, suppresses the topology-aware local aggregation required for graph reasoning.
To address this, we propose a training-free solution, named \textbf{\underline{S}}tructura\textbf{\underline{L}} \textbf{\underline{A}}ttention \textbf{\underline{SH}}arpening (\textsc{Slash}), which amplifies this internal structural understanding via a plug-and-play attention redistribution.
Experiments on pure graph tasks and molecular prediction validate that \textsc{Slash} delivers significant and consistent performance gains across diverse LLMs.\footnote{The code is available at \url{https://github.com/liuyiming01/SLASH}.} 
\end{abstract}

\section{Introduction}
\label{sec:introduction}
Large Language Models (LLMs) have achieved unprecedented success in natural language processing. This has spurred growing interest in applying their power to structured data, particularly graphs, which are ubiquitous in fields ranging from social networks to molecular biology. While promising, effectively enabling LLMs to comprehend and reason over graph topologies remains a significant challenge, leading to a variety of proposed methodologies.

\textbf{Prior Works and Challenges.} Current research predominantly follows two paradigms. The first involves \textit{hybrid architectures} that combine GNNs and LLMs~\citep{pmlr-v235-chen24bh, DBLP:conf/iclr/0002QLYL00023}. While effective, this approach introduces architectural complexity and training overhead, often sacrificing the LLM's generalizability~\citep{DBLP:conf/iclr/0057FKLT0Z24}, 
precisely the property needed to handle out-of-distribution challenges 
inherent in graph learning~\citep{10517351}. The second paradigm involves \textit{feeding serialized graph topologies} directly to LLMs~\citep{DBLP:conf/nips/KimNMCLLH22, DBLP:conf/nips/WangFHTHT23}. However, these work often treats the LLM as a black box, focusing on input encoding strategies~\citep{DBLP:conf/iclr/FatemiHP24} rather than examining \textit{how} LLMs internally process these flattened structures. This ``black box'' treatment overlooks a fundamental mechanism: the \textit{attention sink}~\citep{xiao2024efficient}. This phenomenon, where models allocate disproportionate attention to initial tokens to manage information flow in topologically-flat text~\citep{DBLP:journals/corr/abs-2504-02732}, conflicts when interpreting structured data. The impact of this mechanism on an LLM's ability to reason over graph topologies remains a open question.

\begin{figure*}[ht]
  \centering
  \includegraphics[width=\textwidth]{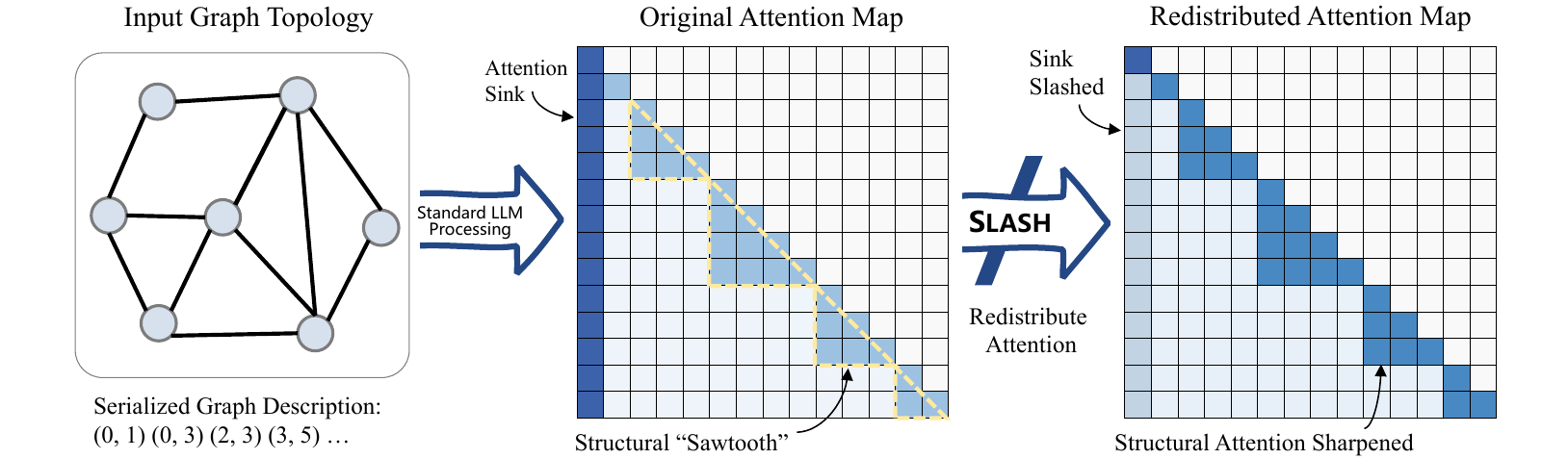}
  \caption{The core mechanism of \textsc{Slash}. While a standard LLM dilutes the latent structural ``sawtooth'' pattern with a dominant attention sink (left), \textsc{Slash} sharpens this topological signal, enhancing the model's focus on the internal structure (right).}
  \label{fig:intro}
\end{figure*}

\textbf{Our Work.} Motivated by the limitations of the ``black box'' approach, our work pioneers a mechanistic exploration of \textit{how} LLMs process serialized graphs. Our investigation yielded a crucial, two-part discovery. First, we found that LLMs are not structurally blind; they \textbf{spontaneously reconstruct the graph's topology internally}, evidenced by a distinct "sawtooth" pattern in their attention maps (Figure~\ref{fig:intro}, left). Second, we observed that this emergent capability is actively suppressed by the attention sink.
This observation led us to theorize a fundamental conflict between \textit{semantic anisotropy and topology-aware local aggregation}: the model's pre-trained bias for directional, anisotropic language processing clashes with the local neighborhood aggregation required for graph structural reasoning, creating a representation bottleneck. Based on this diagnosis, we designed \textbf{\textsc{Slash}}, a solution that works by amplifying the model's latent structural signal rather than injecting external knowledge, as depicted in Figure~\ref{fig:intro} (right).
To summarize, the main contributions of our work are as follows:

\begin{itemize}
    \item We identify the conflict between \textit{semantic anisotropy} and \textit{topology-aware local aggregation} as a fundamental bottleneck, providing the first evidence that an LLM's latent ability to reconstruct graph topology is suppressed by the attention sink.
    \item We propose \textsc{Slash}, a training-free, plug-and-play methodology that amplifies the model's intrinsic structural understanding at inference time.
    \item We experimentally validate \textsc{Slash}'s ability to resolve the attention sink bottleneck, showing it consistently and significantly enhances graph reasoning across diverse LLMs.
\end{itemize}

\section{Related Work}
\label{sec:related_work}

\textbf{LLM for Graph Learning.} The application of LLMs to graph-structured data is a growing research area, with existing approaches largely falling into two main paradigms. (1) \textit{Combining GNNs with LLMs.} This dominant approach combines GNNs for structural encoding with LLMs for semantic understanding. Implementations range from GNN-enhanced LLMs~\citep{pmlr-v235-chen24bh, DBLP:conf/kdd/TangY0SXY024}, to scalable collaborative frameworks~\citep{DBLP:conf/iclr/0002QLYL00023, 10.1145/3539618.3591641, wu2026sog_k}, multi-view models~\citep{shirasuna2024multi}, and LLM-augmented GNNs aiming for generalization~\citep{DBLP:conf/iclr/0057FKLT0Z24}. (2) \textit{Graph serialization for LLMs.} This line of work explores the LLM's intrinsic ability to process serialized topologies, a concept supported by findings that pure Transformers can be powerful graph learners~\citep{DBLP:conf/nips/KimNMCLLH22}. Sparked by inquiries into solving graph problems in text~\citep{DBLP:conf/nips/WangFHTHT23}, this area now includes instruction-tuned models~\citep{DBLP:conf/acl/WangWH00M24, DBLP:conf/sigir/Tang00SSCY024, DBLP:conf/kdd/00010TL24} and architectures with injected structural biases~\citep{wang2025graphkv}. Further work explores encoding strategies to better represent graph structures in a textual format or interpret their internal processing~\citep{DBLP:conf/iclr/FatemiHP24, DBLP:journals/corr/abs-2502-12352}. However, the former approach introduces architectural and training overhead, while the latter often fails to examine \textit{how} LLMs internally process flattened graph structures.

\textbf{Attention Mechanisms and Sinks.} The self-attention mechanism is fundamental to the Transformer architecture~\citep{DBLP:conf/nips/VaswaniSPUJGKP17}, and research analyzes its mechanisms, from head-level syntactic roles and inactivity to layer-wise linguistic properties and overall depth efficiency~\citep{clark-etal-2019-bert, skean2025layer, csordas2025do, sandovalsegura2025identifyingevaluatinginactiveheads}. A recent discovery is the ``attention sink'' phenomenon, where LLMs allocate excessive attention to initial tokens~\citep{xiao2024efficient}. Interpretations of its function vary, from preventing representation over-mixing~\citep{DBLP:journals/corr/abs-2504-02732} to being an emergent property of information compression~\citep{DBLP:journals/corr/abs-2510-06477}, while its emergence during pre-training has been empirically linked to the constraints of softmax normalization~\citep{gu2025when}. Proposed solutions range from post-hoc, training-free calibration~\citep{yu2024unveiling, han2025zerotuning} to architectural modifications like gated attention designed to be sink-free~\citep{qiu2025gated}. This body of research, however, is confined to the context of topologically-flat text. Consequently, the impact of these attention phenomena on an LLM's ability to interpret serialized graph structures remains a unexamined and open question.

\begin{figure*}[thbp]
  \centering
  \includegraphics[width=\textwidth]{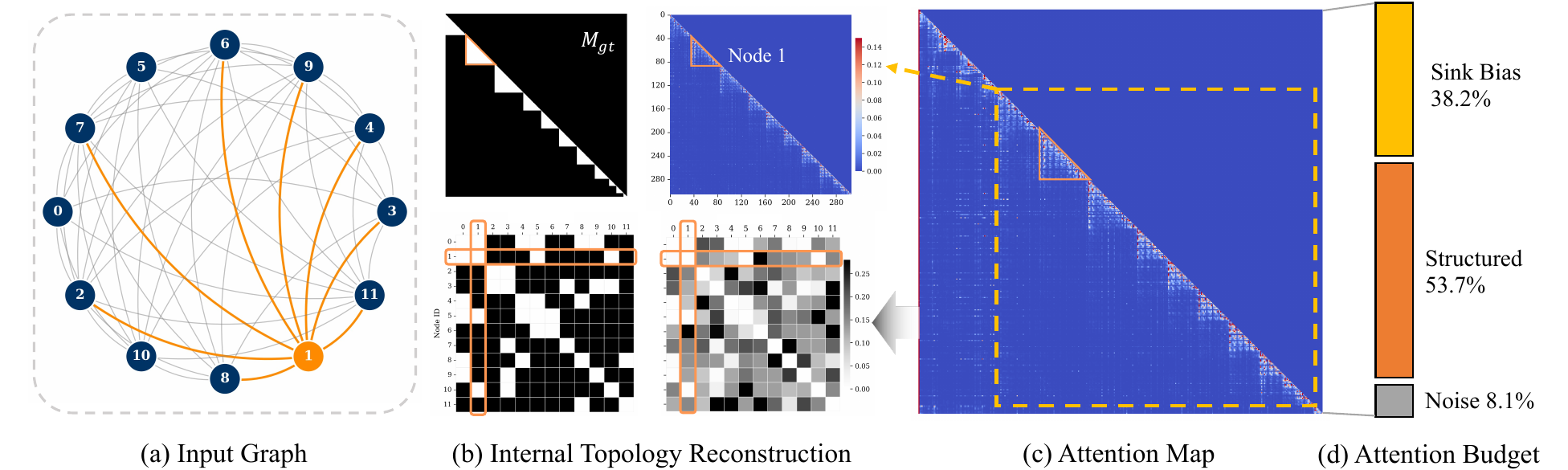}
  \caption{Mechanistic evidence of internal graph reconstruction. (a) Input graph. (b) Internal Topology Reconstruction: Comparison of ground-truth structures (token-level $M_{gt}$ and node adjacency) with their attention-derived counterparts (sawtooth pattern and reconstructed adjacency). (c) Full attention map. (d) Attention Budget: Proportional breakdown of sink bias, structural aggregation, and noise.}
  \label{fig:observation}
\end{figure*}

\section{Preliminary}
\label{sec:preliminary}
In this section, we present the problem formulation of the graph reasoning task and describe its serialization format. We then introduce a structured serialization protocol, which is critical for exposing the latent topological patterns within the model. Finally, we briefly revisit the self-attention mechanism to formally introduce the concept of attention sink.

\subsection{Problem Formulation}
\label{subsec:problem_formulation}
\textbf{Graph Reasoning Task.} 
We consider a graph reasoning task where the input consists of a graph $\mathcal{G}=(\mathcal{V}, \mathcal{E})$ and a natural language query $\mathcal{Q}$. The goal of the Large Language Model (LLM) is to generate the optimal response $\mathcal{Y}$ based on the serialized representation of the graph. Formally, the probability of the output is modeled as $P(\mathcal{Y} \mid S(\mathcal{G}), \mathcal{Q})$, where $S(\cdot)$ is the serialization function.

\textbf{Graph Serialization Specification.}
The serialization function $S(\cdot)$ maps the graph into a textual representation, typically via edge tuples (e.g., \texttt{(u,v)}). While standard lists allow arbitrary ordering, we introduce a \textbf{Source-Node Aggregation} protocol requiring edges from the same source to appear contiguously. 
Specifically, a sequence like \texttt{(0,1)(2,3)(0,3)} is rearranged to \texttt{(0,1)(0,3)(2,3)}. 
Although LLMs are robust to edge permutations, this aggregation aligns physical adjacency with local topology. Such alignment provides the spatial basis to make the internal topological patterns observable.

\subsection{Self-Attention and Attention Sink}
\label{subsec:attention_sink}
For the $h$-th attention head in the $l$-th layer, the self-attention mechanism computes an attention matrix $\mathbf{A}^{(l,h)}$ where, due to the Softmax function, scores in each row are normalized to sum to one: $\sum_{j=1}^i \mathbf{A}^{(l,h)}_{i,j} = 1$.

Latest work shows that LLMs assign significant attention scores to specific token positions (especially initial tokens), termed the "Attention Sink"~\citep{xiao2024efficient}. This is understood to arise from the Softmax constraint itself~\citep{gu2025when} and helps preserve representation distinctiveness in text~\citep{DBLP:journals/corr/abs-2504-02732}. While essential for topologically-flat text, we argue this reflects a pre-training bias that degrades the model's graph reasoning capabilities. Understanding this degradation is a central challenge of this work.

\section{Observation and Theoretical Analysis}
\label{sec:theory}
This section presents our core theoretical analysis. We first reveal that LLMs internally reconstruct graph topology, then formalize this mechanism to identify a fundamental limitation—the \textit{representation bottleneck}. The analysis concludes by deriving the attention sharpening principle as the theoretical solution.

\subsection{Empirical Observation: LLMs Internally Reconstruct Topology}
\label{sec:observation}
To probe how LLMs process serialized graphs, we analyze their internal attention maps. Our analysis uncovers a latent, highly structured phenomenon, as illustrated in Figure~\ref{fig:observation}. Given a input graph (Fig.~\ref{fig:observation}a), the full attention map of specific intermediate heads reveals a distinct ``sawtooth'' pattern emerging amidst the dominant attention sink (Fig.~\ref{fig:observation}c).

This pattern is not random. To formalize its structure, we define a binary, token-level adjacency matrix $\mathbf{M}_{gt}$, where $\mathbf{M}_{gt}[i,j]=1$ if tokens $x_i$ and $x_j$ (with $j \le i$) belong to edge descriptions from the same source node. As shown in Figure~\ref{fig:observation}b, the observed sawtooth pattern is structurally aligned with the ground-truth matrix $\mathbf{M}_{gt}$. This provides direct evidence that the LLM is spontaneously reconstructing the graph topology from the serialized text.

This is not an isolated finding. The phenomenon is consistently observed across various models (e.g., Llama-3.1, Qwen3), concentrated in specific intermediate layers, as detailed in Appendix~\ref{app:observation}. However, as quantified by the attention budget breakdown (Fig.~\ref{fig:observation}d), this emergent structural signal is heavily diluted by the sink. This observation motivates our central hypothesis: a latent structural understanding exists within LLMs, but it is suppressed rather than absent.




\subsection{LLM Layers as Implicit Causal GATs}
\label{sec:implicit_causal_gats}
Our observation in Sec.~\ref{sec:observation} suggests that specific LLM layers perform a locally-focused, topology-aware computation. To formalize this, we analyze the self-attention update for a token's output representation $\mathbf{h}_i$, computed as $\mathbf{h}_i = \sum_{j=0}^{i} \alpha_{i,j} \mathbf{v}_j$, where $\alpha_{i,j}$ is the attention weight and $\mathbf{v}_j$ is the value vector. 

Guided by the observed ``sawtooth'' pattern, which aligns with the causal adjacency matrix $\mathbf{M}_{gt}$, we decompose this update into three functional components:
\begin{equation}
\mathbf{h}_i = \underbrace{\alpha_{i,0} \mathbf{v}_0}_{\text{Sink Bias}} + \underbrace{\sum_{j \in \mathcal{N}_i^{\text{pre}}} \alpha_{i,j} \mathbf{v}_j}_{\text{Structural Aggregation}} + \underbrace{\sum_{k \notin \mathcal{N}_i^{\text{pre}} \cup \{0\}} \alpha_{i,k} \mathbf{v}_k}_{\text{Noise } (\epsilon_i)}
\label{eq:decomposition_full}
\end{equation}
Here, $\mathcal{N}_i^{\text{pre}} = \{j \mid 1 \le j < i, \mathbf{M}_{gt}[i,j]=1\}$ is the set of the token's preceding graph neighbors. This decomposition reveals the layer acts as an implicit causal Graph Attention Network (GAT), featuring a dominant sink bias and a topological aggregation term over local neighbors.

As quantified by the attention budget breakdown in Figure~\ref{fig:observation}d, the residual term ($\epsilon_i$) carries a small fraction of the total attention mass (see Appendix~\ref{app:noise_analysis} for empirical noise ratios across models and tasks). We approximate it away to expose the dominant competition between sink bias and structural aggregation, yielding the simplified model:
\begin{equation}
\mathbf{h}_i \approx \alpha_{i,0} \mathbf{v}_0 + \sum_{j \in \mathcal{N}_i^{\text{pre}}} \alpha_{i,j} \mathbf{v}_j
\label{eq:simplified_model}
\end{equation}
This simplified formulation highlights a key conflict. Unlike a standard GAT where attention is normalized only over local neighbors (i.e., $\sum_{j \in \mathcal{N}_i} \alpha_{i,j} = 1$), the LLM's global Softmax constraint forces $\alpha_{i,0} + \sum_{j \in \mathcal{N}_i^{\text{pre}}} \alpha_{i,j} \approx 1$. 
This creates a zero-sum competition for attention, which we attribute to the LLM's pre-training inertia: a learned preference for a strong attention sink on flat text is misapplied to serialized graph data. This mechanism creates the bottleneck that we analyze next.

\subsection{The Representation Bottleneck}
This subsection analyzes the representation bottleneck caused by a fundamental conflict between the LLM's pre-training bias and the structural nature of graphs. We first explain the conceptual origin of this conflict and then provide its formal mathematical analysis.

\textbf{Conceptual Origin: Semantic Anisotropy vs.\ Topology-Aware Local Aggregation.} The representation bottleneck stems from a conflict between two opposing processing requirements.
\begin{itemize}
    \item \textbf{Semantic Anisotropy in LLMs.} 
    LLM representations for natural language are anisotropic, occupying a narrow cone in the embedding space~\citep{ethayarajh-2019-contextual}. This directional semantic structure is effective for text, where tokens possess rich, self-contained semantics. The attention sink is understood as a mechanism to preserve this property by preventing representation over-mixing~\citep{DBLP:journals/corr/abs-2504-02732}.
    \item \textbf{Topology-Aware Local Aggregation for Graphs.} In contrast, graph nodes are often semantically vacuous; their identity is not self-contained but constructed relationally through local neighborhood aggregation, where a node's representation is built by aggregating information from its graph neighbors. This process is the foundation of standard GNNs such as MPNNs and GATs.
    \item \textbf{The Conflict.} A conflict emerges when the LLM applies its pre-training bias to graph data. The strong directional semantic bias in token representations competes with the local neighborhood aggregation needed for structural reasoning. When this topological signal is suppressed, structurally distinct nodes fail to acquire unique representations, collapsing into a narrow cone and losing the identity-defining information encoded in the topology.
\end{itemize}

\textbf{Mathematical Analysis.} We now formalize the representation bottleneck by analyzing how the sink's attention budget, $\lambda_i = \alpha_{i,0}$, degrades structural information at both node and graph levels. We begin by reformulating our simplified model from Eq.~\eqref{eq:simplified_model}:
\begin{equation}
    \mathbf{h}_i = \lambda_i \mathbf{v}_0 + (1 - \lambda_i) \mathbf{h}_i^{\text{topo}},
    \label{eq:convex_combo}
\end{equation}
where $\mathbf{h}_i^{\text{topo}} = \frac{1}{1-\lambda_i} \sum_{j \in \mathcal{N}_i^{\text{pre}}} \alpha_{i,j} \mathbf{v}_j$ is the pure topological representation, normalized to behave like a standard GAT aggregation.

At the node level, the sink's influence causes a geometric contraction in the embedding space.
\begin{theorem}[Geometric Contraction]
\label{thm:contraction}
The Euclidean distance between two node representations $\mathbf{h}_k$ and $\mathbf{h}_l$ is contracted by a factor of $(1-\lambda)$ compared to their sink-free distance, assuming a uniform sink weight $\lambda_k \approx \lambda_l \approx \lambda$:
\begin{equation}
    \| \mathbf{h}_k - \mathbf{h}_l \| = (1 - \lambda) \| \mathbf{h}_k^{\text{topo}} - \mathbf{h}_l^{\text{topo}} \|.
\end{equation}
\end{theorem}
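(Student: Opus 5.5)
The argument is a direct algebraic computation from the convex-combination form in Eq.~\eqref{eq:convex_combo}. I first write the two node representations explicitly as $\mathbf{h}_k = \lambda_k \mathbf{v}_0 + (1-\lambda_k)\mathbf{h}_k^{\text{topo}}$ and $\mathbf{h}_l = \lambda_l \mathbf{v}_0 + (1-\lambda_l)\mathbf{h}_l^{\text{topo}}$. The key structural fact is that both representations share the \emph{same} sink value vector $\mathbf{v}_0$, since the sink is a single token at position $0$ visible to every query under causal masking. This shared vector is what allows the sink contribution to cancel.

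Next I impose the uniform-sink assumption $\lambda_k = \lambda_l = \lambda$ and subtract. This gives
\begin{equation*}
\mathbf{h}_k - \mathbf{h}_l = \lambda(\mathbf{v}_0 - \mathbf{v}_0) + (1-\lambda)\bigl(\mathbf{h}_k^{\text{topo}} - \mathbf{h}_l^{\text{topo}}\bigr) = (1-\lambda)\bigl(\mathbf{h}_k^{\text{topo}} - \mathbf{h}_l^{\text{topo}}\bigr).
\end{equation*}
Taking the Euclidean norm and using absolute homogeneity, $\|c\mathbf{x}\| = |c|\,\|\mathbf{x}\|$, together with $0 \le \lambda \le 1$ (so $|1-\lambda| = 1-\lambda$), yields the claimed identity. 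The interpretation as a contraction follows because $1-\lambda \le 1$, with strict contraction whenever $\lambda > 0$.

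The only step needing care is the meaning of ``$\approx$'' in the hypothesis. I would state the equality under exact equality $\lambda_k = \lambda_l$. I would then add a perturbation remark for $\lambda_k \neq \lambda_l$. Writing $\lambda_k = \lambda + \delta$ and $\lambda_l = \lambda$, the difference acquires an extra term $\delta\bigl(\mathbf{v}_0 - \mathbf{h}_k^{\text{topo}}\bigr)$. By the triangle inequality,
\begin{equation*}
\bigl|\,\|\mathbf{h}_k - \mathbf{h}_l\| - (1-\lambda)\|\mathbf{h}_k^{\text{topo}} - \mathbf{h}_l^{\text{topo}}\|\,\bigr| \le |\delta|\,\|\mathbf{v}_0 - \mathbf{h}_k^{\text{topo}}\|.
\end{equation*}
This shows the identity holds up to an error linear in the sink-weight discrepancy. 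I expect this robustness remark, rather than the main computation, to be the only nontrivial part, and it is routine.
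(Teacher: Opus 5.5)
Your proposal is correct and follows the paper's proof: substitute the convex-combination form with a common sink weight $\lambda$, subtract so the shared $\lambda\mathbf{v}_0$ term cancels, and take norms. Your perturbation bound for $\lambda_k \neq \lambda_l$, with error term $\delta(\mathbf{v}_0 - \mathbf{h}_k^{\text{topo}})$, is also correct; the paper does not include it.
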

This result (proof in Appendix~\ref{app:proof_contraction}) proves that as the sink's budget $\lambda$ increases, structurally distinct nodes become geometrically indistinguishable.

At the graph level, this node-level contraction culminates in a more damaging effect: the suppression of structural information in the spectral domain. From a Graph Signal Processing (GSP) perspective, graph structure understanding relies on identifying local variations, which are encoded in high-frequency components. We measure this with the graph's Dirichlet Energy, $E_{\text{Dir}}(\mathbf{H}) = \text{tr}(\mathbf{H}^\top \mathbf{L} \mathbf{H})$, where $\mathbf{L}$ is the graph Laplacian.
\begin{proposition}[Dirichlet Energy Decay]
\label{prop:energy_decay}
The sink's presence with weight $\lambda$ quadratically suppresses the Dirichlet Energy:
\begin{equation}
    E_{\text{Dir}}(\mathbf{H}) \approx (1-\lambda)^2 E_{\text{Dir}}(\mathbf{H}^{\text{topo}}).
\end{equation}
\end{proposition}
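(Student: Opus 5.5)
The plan is to reduce the proposition to the node-level identity of Theorem~\ref{thm:contraction} by expanding the Dirichlet energy as a sum over edges. For the combinatorial Laplacian $\mathbf{L} = \mathbf{D} - \mathbf{A}$ of the graph, the standard identity is
\begin{equation*}
E_{\text{Dir}}(\mathbf{H}) = \text{tr}(\mathbf{H}^\top \mathbf{L} \mathbf{H}) = \sum_{(k,l)\in\mathcal{E}} \|\mathbf{h}_k - \mathbf{h}_l\|^2 ,
\end{equation*}
where the rows of $\mathbf{H}$ are the node representations $\mathbf{h}_k$. For the normalized Laplacian the same identity holds with $\mathbf{h}_k$ replaced by $\mathbf{h}_k/\sqrt{d_k}$, and every step below goes through unchanged. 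So the energy is a sum of squared pairwise distances across edges, which is exactly the quantity Theorem~\ref{thm:contraction} controls.

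Next, substitute the convex-combination form \eqref{eq:convex_combo} into each difference. This gives
\begin{equation*}
\mathbf{h}_k - \mathbf{h}_l = (\lambda_k - \lambda_l)\mathbf{v}_0 + (1-\lambda_k)\mathbf{h}_k^{\text{topo}} - (1-\lambda_l)\mathbf{h}_l^{\text{topo}} .
\end{equation*}
Under the same uniform-sink assumption $\lambda_k \approx \lambda_l \approx \lambda$ used in Theorem~\ref{thm:contraction}, the $\mathbf{v}_0$ term vanishes. The difference becomes $(1-\lambda)(\mathbf{h}_k^{\text{topo}} - \mathbf{h}_l^{\text{topo}})$, so each squared edge term picks up a factor $(1-\lambda)^2$. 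Summing over edges and re-applying the trace identity to $\mathbf{H}^{\text{topo}}$ gives $E_{\text{Dir}}(\mathbf{H}) = (1-\lambda)^2 E_{\text{Dir}}(\mathbf{H}^{\text{topo}})$.

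An equivalent matrix-level route makes the mechanism transparent. Write $\mathbf{H} = \lambda \mathbf{1}\mathbf{v}_0^\top + (1-\lambda)\mathbf{H}^{\text{topo}}$ and use $\mathbf{L}\mathbf{1} = 0$. The rank-one sink component then lies in the kernel of $\mathbf{L}$ and is pure DC, with zero graph frequency, so it contributes nothing to the energy. Only the rescaled topological part survives, which is the spectral reading of the proposition.

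The main obstacle is justifying the "$\approx$", since the sink weights are not exactly uniform. I would first write $\lambda_k = \lambda + \delta_k$ and expand the edge difference to
\begin{equation*}
(1-\lambda)(\mathbf{h}_k^{\text{topo}} - \mathbf{h}_l^{\text{topo}}) + (\delta_k - \delta_l)\mathbf{v}_0 - \delta_k \mathbf{h}_k^{\text{topo}} + \delta_l \mathbf{h}_l^{\text{topo}} .
\end{equation*}
I would then bound the correction terms by $O(\max_k|\delta_k| \cdot (\|\mathbf{v}_0\| + \max_k \|\mathbf{h}^{\text{topo}}_k\|))$ per edge. This shows the equality holds up to an error that is first order in the sink-weight deviation, which justifies the approximation. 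I would also note that dropping the noise term $\epsilon_i$ from Eq.~\eqref{eq:simplified_model} is inherited from the simplified model rather than introduced here.
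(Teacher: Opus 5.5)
Your main argument is correct for the combinatorial Laplacian $\mathbf{L}=\mathbf{D}-\mathbf{A}$. Your third paragraph is exactly the paper's proof: the paper writes $\mathbf{H}\approx(1-\lambda)\mathbf{H}^{\text{topo}}+\lambda\mathbf{1}\mathbf{v}_0^\top$, expands $\text{tr}(\mathbf{H}^\top\mathbf{L}\mathbf{H})$, and uses $\mathbf{L}\mathbf{1}=\mathbf{0}$ to remove the cross terms and the sink-only term.

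Your primary route is different. You expand the energy as a sum of squared edge differences and apply Theorem~\ref{thm:contraction} edge by edge. This buys two things the paper's version does not show. First, the proposition becomes a literal corollary of the node-level contraction. Second, it becomes visible that putting a single scalar $\lambda$ in front of $\mathbf{1}\mathbf{v}_0^\top$ is itself the uniform-sink assumption. That assumption is now needed across every edge of each connected component, not just one pair, and the paper's proof leaves it implicit. The paper's route is shorter and gives the spectral reading in one line: the sink is the DC component lying in $\ker\mathbf{L}$.

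Your remark that the normalized Laplacian case goes through unchanged is false. The identity you quote for $\mathbf{L}_{\text{sym}}=\mathbf{I}-\mathbf{D}^{-1/2}\mathbf{A}\mathbf{D}^{-1/2}$ is right, but the next step fails. Substituting the convex combination into $\|\mathbf{h}_k/\sqrt{d_k}-\mathbf{h}_l/\sqrt{d_l}\|^2$ leaves a sink contribution $\lambda(d_k^{-1/2}-d_l^{-1/2})\mathbf{v}_0$, which cancels only when $d_k=d_l$.

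Equivalently, $\ker\mathbf{L}_{\text{sym}}$ is spanned by $\mathbf{D}^{1/2}\mathbf{1}$, not by $\mathbf{1}$. The sink-only term therefore equals $\lambda^2\|\mathbf{v}_0\|^2\sum_{(k,l)\in\mathcal{E}}(d_k^{-1/2}-d_l^{-1/2})^2$. This is strictly positive on any irregular graph, which covers most molecular graphs in the paper's benchmarks, and the cross term survives as well.

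The proposition genuinely needs a Laplacian with $\mathbf{1}$ in its kernel. That is what the paper's proof invokes, and what your own matrix-level paragraph uses. Drop the aside, or restrict it to regular graphs.

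A smaller point concerns your perturbation step: the bound you obtain is absolute, not relative. The main term carries a factor $(1-\lambda)^2$, whereas the correction $(\delta_k-\delta_l)\mathbf{v}_0$ carries no factor of $(1-\lambda)$. This correction is exactly the part $\mathbf{L}$ no longer filters, since $\mathbf{L}\boldsymbol{\lambda}\neq\mathbf{0}$ for a non-constant sink vector $\boldsymbol{\lambda}=(\lambda_1,\dots,\lambda_n)^\top$ on a connected graph.

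So in the dominant-sink regime the paper cares about, ``$\approx$'' is justified only when $|\delta_k-\delta_l|\,\|\mathbf{v}_0\|\ll(1-\lambda)\|\mathbf{h}_k^{\text{topo}}-\mathbf{h}_l^{\text{topo}}\|$. It is not enough that the deviations are merely small. The paper itself gives no quantification of the approximation, so this is a refinement of your argument rather than a departure from the paper.
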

This decay (proof in Appendix~\ref{app:proof_decay}) occurs because the sink term is filtered out by the Laplacian, leaving only the quadratically scaled topological term. Together, these results reveal that the attention sink acts as an aggressive spectral low-pass filter, forcing distinct node representations to collapse and suppressing the high-frequency signals that encode structural differences.

\subsection{Reversing the Representation Bottleneck}
\label{sec:principle_amplification}

Based on our analysis, the solution is to reverse the representation bottleneck by re-allocating attention from the sink to the topological structure. We introduce a control factor $\gamma \in [0, 1)$ and define the attention sharpening principle which redistributes the attention budget recovered from the sink, $(1-\gamma)\lambda_i$, proportionally among all non-sink tokens:
\begin{equation}
\alpha'_{i,j} = \begin{cases} 
\gamma \cdot \alpha_{i,j} & \text{if } j = 0 \\
\alpha_{i,j} \cdot \left(1 + \frac{(1-\gamma) \alpha_{i,0}}{1-\alpha_{i,0}}\right) & \text{if } j > 0
\end{cases}
\label{eq:sharpening_principle}
\end{equation}
We now show this principle reverses the bottleneck at both node and graph levels.

\textbf{Node-Level Expansion.} 
At the node level, sharpening reverses the geometric contraction.

\begin{theorem}[Geometric Expansion]
\label{thm:expansion}
The sharpening principle induces a scaling of the Euclidean distance between any two node representations. The scaling factor, which we term the Topological Amplification Ratio, $\rho(\lambda, \gamma)$, is given by:
\begin{equation}
    \frac{\|\mathbf{h}'_k - \mathbf{h}'_l\|}{\|\mathbf{h}_k - \mathbf{h}_l\|} = \rho(\lambda, \gamma) = \frac{1 - \gamma \lambda}{1 - \lambda}.
\end{equation}
\end{theorem}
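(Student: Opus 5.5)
The plan is to rewrite the sharpened representation $\mathbf{h}'_i$ in the same convex-combination form as Eq.~\eqref{eq:convex_combo} and then reuse the argument of Theorem~\ref{thm:contraction}. Work in the simplified model of Eq.~\eqref{eq:simplified_model}, where the noise term is dropped, and adopt the uniform-sink assumption $\lambda_k \approx \lambda_l \approx \lambda$ from Theorem~\ref{thm:contraction}.

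First, compute the non-sink mass after sharpening. By Eq.~\eqref{eq:sharpening_principle}, each weight with $j>0$ is multiplied by $1 + \frac{(1-\gamma)\lambda_i}{1-\lambda_i} = \frac{1-\gamma\lambda_i}{1-\lambda_i}$. Since $\sum_{j>0}\alpha_{i,j} = 1-\lambda_i$, the new non-sink mass is $1-\gamma\lambda_i$, and the sink weight becomes $\gamma\lambda_i$. The new weights therefore still sum to one. Applying the same multiplier to the neighbor weights gives
\begin{equation*}
\mathbf{h}'_i = \gamma\lambda_i \mathbf{v}_0 + \frac{1-\gamma\lambda_i}{1-\lambda_i}\sum_{j\in\mathcal{N}_i^{\text{pre}}}\alpha_{i,j}\mathbf{v}_j = \gamma\lambda_i\mathbf{v}_0 + (1-\gamma\lambda_i)\,\mathbf{h}_i^{\text{topo}}.
\end{equation*}
The key observation is that $\mathbf{h}_i^{\text{topo}}$ is unchanged by sharpening, because the redistribution is proportional and the normalized topological aggregate is invariant under uniform rescaling of the non-sink weights.

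Next, with $\lambda_k=\lambda_l=\lambda$, the sink terms $\gamma\lambda\mathbf{v}_0$ cancel in the difference. This gives $\|\mathbf{h}'_k-\mathbf{h}'_l\| = (1-\gamma\lambda)\|\mathbf{h}_k^{\text{topo}}-\mathbf{h}_l^{\text{topo}}\|$. Theorem~\ref{thm:contraction} gives $\|\mathbf{h}_k-\mathbf{h}_l\| = (1-\lambda)\|\mathbf{h}_k^{\text{topo}}-\mathbf{h}_l^{\text{topo}}\|$. Dividing the two expressions, assuming $\mathbf{h}_k^{\text{topo}}\neq\mathbf{h}_l^{\text{topo}}$ and $\lambda<1$, yields $\rho(\lambda,\gamma) = \frac{1-\gamma\lambda}{1-\lambda}$. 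Because $\gamma<1$, this ratio is at least $1$.

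The main subtlety is the interaction with the noise term. In the exact update, the proportional rescaling also amplifies $\epsilon_i$ by the same factor $\frac{1-\gamma\lambda_i}{1-\lambda_i}$. For consistency, the cleanest route is to define $\mathbf{h}_i^{\text{topo}}$ as the normalized aggregate over all non-sink tokens, noise included. Then $\mathbf{h}_i = \lambda_i\mathbf{v}_0 + (1-\lambda_i)\mathbf{h}_i^{\text{topo}}$ holds exactly, and the ratio is exact under the uniform-sink assumption. Alternatively, one can work in the simplified model of Eq.~\eqref{eq:simplified_model}, where the ratio is exact up to the noise already neglected there. A second caveat is heterogeneous sink weights $\lambda_k\neq\lambda_l$: the $\mathbf{v}_0$ terms then no longer cancel, so the equality should be stated under the same uniform-$\lambda$ assumption as Theorem~\ref{thm:contraction}.
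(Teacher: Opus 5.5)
Your proof is correct and follows the same route as the paper's: write $\mathbf{h}'_i = \gamma\lambda_i\mathbf{v}_0 + (1-\gamma\lambda_i)\mathbf{h}_i^{\text{topo}}$, cancel the sink terms under the uniform-$\lambda$ assumption, and divide by the contraction identity of Theorem~\ref{thm:contraction}. You also check explicitly that the non-sink multiplier is $\frac{1-\gamma\lambda_i}{1-\lambda_i}$, which leaves $\mathbf{h}_i^{\text{topo}}$ unchanged and makes the ratio exact if the noise is folded into the normalized non-sink aggregate; the paper only asserts this step.
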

This result (proof in Appendix~\ref{app:proof_expansion}) confirms that our method directly counteracts the geometric indistinguishability caused by the sink.

\textbf{Graph-Level Spectral Amplification.}
At the graph level, this geometric expansion manifests as a spectral amplification.
\begin{proposition}[Spectral Amplification]
\label{prop:amplification}
The sharpened Dirichlet Energy, $E'_{\text{Dir}}(\mathbf{H})$, can be approximated as the original energy scaled by the square of the Topological Amplification Ratio, $\rho^2$:
\begin{equation}
    E'_{\text{Dir}}(\mathbf{H}) \approx \rho(\lambda, \gamma)^2 E_{\text{Dir}}(\mathbf{H}) = \left(\frac{1 - \gamma\lambda}{1 - \lambda}\right)^2 E_{\text{Dir}}(\mathbf{H}).
\end{equation}
\end{proposition}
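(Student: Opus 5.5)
The plan is to write the sharpened representation in the same convex-combination form as Eq.~\eqref{eq:convex_combo}, and then reuse the argument behind Proposition~\ref{prop:energy_decay}.

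\textbf{Step 1: rewrite $\mathbf{h}'_i$ in convex-combination form.} Under the simplified model of Eq.~\eqref{eq:simplified_model}, apply the sharpening principle of Eq.~\eqref{eq:sharpening_principle} with $\alpha_{i,0}=\lambda_i$. The sink weight becomes $\gamma\lambda_i$. Every non-sink weight is multiplied by
\[
1+\frac{(1-\gamma)\lambda_i}{1-\lambda_i}=\frac{1-\gamma\lambda_i}{1-\lambda_i}.
\]
Because all non-sink weights are rescaled by the same factor, the normalized topological aggregate $\mathbf{h}_i^{\text{topo}}$ does not change. Hence
\[
\mathbf{h}'_i=\gamma\lambda_i\mathbf{v}_0+(1-\gamma\lambda_i)\,\mathbf{h}_i^{\text{topo}}.
\]
This is Eq.~\eqref{eq:convex_combo} with $\lambda_i$ replaced by $\gamma\lambda_i$.

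\textbf{Step 2: compute $E'_{\text{Dir}}$.} Stack the rows into matrices and adopt the uniform-sink assumption $\lambda_i\approx\lambda$, as in Theorem~\ref{thm:contraction}. Then
\[
\mathbf{H}'=\gamma\lambda\,\mathbf{1}\mathbf{v}_0^\top+(1-\gamma\lambda)\,\mathbf{H}^{\text{topo}}.
\]
Since $\mathbf{L}\mathbf{1}=0$, the sink term is annihilated by the Laplacian, so every cross term and the sink-squared term vanish in $\text{tr}(\mathbf{H}'^\top\mathbf{L}\mathbf{H}')$. This gives
\[
E'_{\text{Dir}}(\mathbf{H})\approx(1-\gamma\lambda)^2E_{\text{Dir}}(\mathbf{H}^{\text{topo}}),
\]
which is exactly the computation in Proposition~\ref{prop:energy_decay} with $\gamma\lambda$ in place of $\lambda$.

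\textbf{Step 3: compare with the unsharpened energy.} Proposition~\ref{prop:energy_decay} gives
\[
E_{\text{Dir}}(\mathbf{H}^{\text{topo}})\approx E_{\text{Dir}}(\mathbf{H})/(1-\lambda)^2.
\]
This step is valid because $\lambda<1$. Substituting it into Step 2 yields the factor $\rho(\lambda,\gamma)^2$. As a consistency check, the same identity appears at the node level: summing the squared distances of Theorem~\ref{thm:expansion} over edges with weights $w_{kl}$ gives $E_{\text{Dir}}=\sum_{(k,l)}w_{kl}\|\mathbf{h}_k-\mathbf{h}_l\|^2$, so each term is scaled by $\rho^2$.

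\textbf{Main obstacle.} The difficult step is justifying the approximation when the $\lambda_i$ are not exactly equal. In that case the sink contributes $\mathbf{L}\boldsymbol{\lambda}$-type cross terms, which do not vanish. To handle this, I would write $\lambda_i=\lambda+\delta_i$ and show that the error terms are $O(\|\delta\|)$ relative to the main term. This is the reason the statement uses ``$\approx$'', and it matches the level of rigor of Proposition~\ref{prop:energy_decay}.
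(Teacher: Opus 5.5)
Your proposal is correct and follows the paper's derivation essentially step for step: write $\mathbf{H}'\approx(1-\gamma\lambda)\mathbf{H}^{\text{topo}}+\gamma\lambda\mathbf{1}\mathbf{v}_0^\top$, remove the sink term using $\mathbf{L}\mathbf{1}=\mathbf{0}$ to get $(1-\gamma\lambda)^2E_{\text{Dir}}(\mathbf{H}^{\text{topo}})$, and substitute $E_{\text{Dir}}(\mathbf{H}^{\text{topo}})\approx E_{\text{Dir}}(\mathbf{H})/(1-\lambda)^2$ from Proposition~\ref{prop:energy_decay}. Your Step 1 adds the explicit computation $1+\frac{(1-\gamma)\lambda_i}{1-\lambda_i}=\frac{1-\gamma\lambda_i}{1-\lambda_i}$, which shows that $\mathbf{h}_i^{\text{topo}}$ is unchanged by sharpening; the paper only asserts this (in the proof of Theorem~\ref{thm:expansion}), and your edge-sum consistency check and non-uniform-$\lambda$ remark go beyond the paper, which simply assumes a uniform $\lambda$ throughout.
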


This operation (derivation in Appendix~\ref{app:proof_amplification}) acts as a controllable high-pass amplifier, recovering the high-frequency structural components suppressed by the sink and effectively ``de-blurring'' the graph representation.

\textbf{Principled Selection of $\gamma$.} The control factor $\gamma$ governs the trade-off between structural amplification and model fidelity. On one hand, setting $\gamma \to 0$ (Hard Removal) maximizes topological amplification but risks what has been termed ``representation collapse''~\citep{DBLP:journals/corr/abs-2504-02732} by ignoring the sink's stabilizing role. On the other hand, setting $\gamma \to 1$ (No Intervention) preserves maximum fidelity but fails to correct the representation bottleneck.

Therefore, an optimal $\gamma \in (0,1)$ exists as a necessary compromise. Instead of being a fixed empirical value, its selection is guided by a lightweight, automated calibration process. We determine the optimal $\gamma$ for each model by evaluating performance on a small, disjoint calibration dataset across a predefined range of values. This principled approach ensures that we enhance structural clarity while respecting the model's foundational semantic space.

\section{Methodology}
\label{sec:methodology}
Guided by our theoretical analysis, we propose \textbf{\underline{S}}tructura\underline{L} \textbf{\underline{A}}ttention \textbf{\underline{SH}}arpening (\textsc{Slash}), a training-free solution that enhances the intrinsic structural understanding of LLMs. \textsc{Slash} consists of two main phases: an \textbf{Offline Phase} for identification and calibration, and an \textbf{Online Phase} to apply attention sharpening during inference.

\subsection{Offline Phase: Identification and Calibration}
\label{subsec:offline_phase}
The offline phase operates on a small sample of graph data to identify topology-aware heads ($\mathcal{H}_{topo}$) and calibrate the optimal sharpening factor ($\gamma$). As a critical first step, each graph's edge list in this data undergoes the \textbf{Source-Node Aggregation} protocol (Sec.~\ref{sec:preliminary}). This provides the necessary spatial basis for revealing the internal topological patterns required by the subsequent screening process.

\textbf{Entropy-based Activity Filtering.} Inspired by prior studies~\citep{jawahar-etal-2019-bert}, our first stage distinguishes active from inactive heads using Matrix-Based Entropy, defined as:
\begin{equation}
    H(\mathbf{A}^{(l,h)}) = - \sum_j p_j \log p_j, \quad \text{where } p_j = \frac{\sigma_j^2}{\|\mathbf{A}^{(l,h)}\|_F^2},
\end{equation}
and $\sigma_j$ denotes the $j$-th singular value of $\mathbf{A}^{(l,h)}$. A high score indicates a complex attention pattern, and these active heads are concentrated in the model's intermediate layers (see Appendix~\ref{app:entropy} for an example with Llama-3.1-8B). We retain this pool of high-entropy heads, isolated via an automatic threshold, for the next stage of analysis.



\textbf{Structural Feature Extraction.} To evaluate the structural alignment of these candidate heads, we extract features from each attention map $\mathbf{A}^{(l,h)}$: (1) \textit{Region Extraction}, to isolate token regions corresponding to edge descriptions; (2) \textit{Binarization}, where we retain only the top-$k$ attention scores (with $k$ being the number of non-zero elements in the ground-truth adjacency matrix $\mathbf{M}_{gt}$) to create a binary matrix $\mathbf{B}^{(l,h)}$; and (3) \textit{Morphological Smoothing}, where a morphological closing (one binary dilation followed by one binary erosion with a $3 \times 3$ square structuring element) is applied to $\mathbf{B}^{(l,h)}$ to connect fragmented regions and yield a smoothed map $\hat{\mathbf{B}}^{(l,h)}$.


\textbf{Attention Concentration Scoring.} 
With the smoothed feature map, we define an attention concentration score $\mathcal{C}$ to quantify alignment with the ground-truth topology $\mathbf{M}_{gt}$. The score measures concentration within a target region $\Omega_{in}$ versus leakage into an outer region $\Omega_{out}$, based on an in-region error $\mathcal{E}_{in}$ and an out-of-region error $\mathcal{E}_{out}$:
\begin{align}
\mathcal{E}_{in} &= \frac{1}{|\Omega_{in}|} \sum_{(i,j) \in \Omega_{in}} (1 - \hat{\mathbf{B}}^{(l,h)}_{ij}), \\
\mathcal{E}_{out} &= \frac{1}{|\Omega_{out}|} \sum_{(i,j) \in \Omega_{out}} \hat{\mathbf{B}}^{(l,h)}_{ij}.
\end{align}
The final score is computed as $\mathcal{C}^{(l,h)} = (1 - \mathcal{E}_{in}) \cdot (1 - \mathcal{E}_{out})$.


\textbf{Head Selection via Automatic Thresholding.} The final set of topology-aware heads, $\mathcal{H}_{topo}$, is selected by applying Otsu's method~\citep{4310076} to find thresholds for both the entropy scores and the concentration scores: $\mathcal{H}_{topo} = \{ (l,h) \mid H(\mathbf{A}^{(l,h)}) \ge T_S \text{ and } \mathcal{C}^{(l,h)} \ge T_C \}.$

\textbf{Sharpening Factor Calibration.} Following head identification, we determine the optimal sharpening factor $\gamma$ for each model-task pair via a lightweight, automated process. We evaluate performance on a small, disjoint calibration dataset across a range of $\gamma$ values (e.g., $[0.1, \dots, 1.0]$) and select the best-performing one for that specific configuration. This one-time calibration for each pair yields a dedicated $\gamma$ that is used for all of its corresponding test evaluations.

\subsection{Online Phase: Inference-Time Sharpening}
\label{subsec:online_sharpening}
We implement the online phase as a plug-and-play module that intercepts the attention map of each targeted head $(l,h) \in \mathcal{H}_{topo}$ during inference. It then applies the sharpening principle (Eq.~\eqref{eq:sharpening_principle}) using the pre-calibrated $\gamma$ to redistribute attention. This intervention modifies only the attention scores, enabling the model to leverage its latent structural understanding for the downstream task.

\section{Experiment}
\label{sec:experiments}
In this section, we conduct a comprehensive set of experiments to evaluate the effectiveness of \textsc{Slash}. The evaluation aims to answer the following research questions:
\begin{itemize}
    \item \textbf{RQ1:} How effective is \textsc{Slash} at enhancing the performance of both general and fine-tuned LLMs on diverse graph reasoning tasks?
    \item \textbf{RQ2:} What is the impact of key hyperparameters and intervention strategies on \textsc{Slash}'s effectiveness?
    \item \textbf{RQ3:} How do the different components of the offline identification method contribute to its effectiveness?
    \item \textbf{RQ4:} What do case studies reveal about the effectiveness and boundary conditions of \textsc{Slash}?
\end{itemize}

\subsection{Experimental Setup}
\textbf{Datasets.} We evaluate \textsc{Slash} on two benchmark suites. \textbf{(1) GraphInstruct}\citep{DBLP:conf/kdd/00010TL24} assesses performance on pure graph computational tasks. \textbf{(2) MolecularNet}\citep{C7SC02664A} provides 5 datasets for real-world Molecular Property Prediction (MPP). For MolecularNet, we convert SMILES strings into graph descriptions (atoms as nodes, bonds as edges) to expose the explicit topology. Prompt details appear in Appendix~\ref{app:prompt}.

\textbf{Models and Baselines.} Our evaluation includes both general-purpose and domain-specific LLMs. General-purpose models include the Llama3 and Qwen3 series~\citep{DBLP:journals/corr/abs-2407-21783,DBLP:journals/corr/abs-2505-09388}, covering various model sizes. Domain-specific baselines include state-of-the-art fine-tuned models: GraphWiz variants (Mistral-7B-RFT, LLaMA2-7B/13B-DPO) for GraphInstruct, and MolecularGPT~\citep{DBLP:journals/corr/abs-2406-12950} for MolecularNet.

\textbf{Implementation Details.} The sharpening factor $\gamma$ is calibrated independently for each model and task via the process in Sec.\ref{subsec:offline_phase}. All main results adopt the layer-level intervention strategy (Sec.\ref{sec:analysis_design}). Appendix~\ref{app:compute_details} provides further details on the computational environment and overhead.

\begin{table*}[ht]
\centering
\caption{Accuracy on the GraphInstruct benchmark. \textsc{Slash} significantly improves the performance of general-purpose LLMs, while providing only marginal gains for fine-tuned models.}
\label{tab:graphwiz_results}
\resizebox{\textwidth}{!}{%
\begin{tabular}{ll|ccccccccc|l}
\toprule
\textbf{Model} & \textbf{Method} & \textbf{Cycle} & \textbf{Connect} & \textbf{Bipartite} & \textbf{Topology} & \textbf{Shortest} & \textbf{Triangle} & \textbf{Flow} & \textbf{Hamilton} & \textbf{Subgraph} & \textbf{Avg}  \\
\midrule
\multirow{2}{*}{Llama-3.2-3B} & Vanilla & 0.180 & 0.340 & 0.005 & 0.045 & 0.030 & 0.000 & 0.075 & 0.092 & 0.000 & 0.085 \\
                         & \textsc{Slash} & \textbf{0.573} & 0.340 & \textbf{0.235} & \textbf{0.048} & 0.030 & \textbf{0.048} & \textbf{0.113} & \textbf{0.278} & \textbf{0.268} & \textbf{0.214} \\
\midrule
\multirow{2}{*}{Llama-3.1-8B} & Vanilla & 0.660 & 0.730 & 0.235 & 0.060 & 0.018 & 0.018 & 0.128 & 0.305 & 0.385 & 0.282 \\
                         & \textsc{Slash} & \textbf{0.680} & \textbf{0.797} & \textbf{0.547} & 0.022 & 0.015 & \textbf{0.052} & \textbf{0.175} & \textbf{0.315} & \textbf{0.560} & \textbf{0.352} \\
\midrule
\multirow{2}{*}{Qwen3-4B} & Vanilla & 0.645 & 0.420 & 0.048 & 0.003 & 0.018 & 0.000 & 0.000 & 0.223 & 0.005 & 0.151 \\
                         & \textsc{Slash} & \textbf{0.690} & 0.388 & \textbf{0.195} & 0.003 & \textbf{0.025} & \textbf{0.003} & \textbf{0.015} & \textbf{0.290} & \textbf{0.145} & \textbf{0.195} \\
\midrule
\multirow{2}{*}{Qwen3-8B} & Vanilla & 0.395 & 0.537 & 0.177 & 0.000 & 0.000 & 0.000 & 0.062 & 0.177 & 0.215 & 0.174 \\
                         & \textsc{Slash} & \textbf{0.905} & \textbf{0.858} & \textbf{0.570} & 0.000 & 0.000 & 0.000 & 0.062 & \textbf{0.318} & \textbf{0.588} & \textbf{0.367} \\
\midrule
\multirow{2}{*}{Qwen3-14B} & Vanilla & 0.545 & 0.217 & 0.155 & 0.018 & 0.025 & 0.013 & 0.068 & 0.190 & 0.588 & 0.202 \\
                         & \textsc{Slash} & \textbf{0.740} & \textbf{0.733} & 0.155 & \textbf{0.100} & \textbf{0.037} & 0.013 & \textbf{0.070} & \textbf{0.247} & 0.588 & \textbf{0.298} \\
\midrule
\multirow{2}{*}{\makecell{GraphWiz- \\ Mistral-7B}} & Vanilla & 0.923 & 0.897 & 0.726 & 0.200 & 0.306 & 0.389 & 0.306 & 0.291 & 0.863 & 0.544 \\
                         & \textsc{Slash} & 0.914 & 0.891 & 0.726 & \textbf{0.211} & 0.306 & 0.369 & \textbf{0.311}& \textbf{0.300} & \textbf{0.866} & 0.544 \\
\midrule
\multirow{2}{*}{\makecell{GraphWiz- \\ LLaMA2-7B}} & Vanilla & 0.874 & 0.814 & 0.854 & 0.471 & 0.231 & 0.523 & 0.437 & 0.814 & 0.754 & 0.642 \\
                         & \textsc{Slash} & \textbf{0.891} & 0.806 & 0.854 & 0.446 & 0.231 & 0.517 & 0.437 & 0.809 & \textbf{0.757} & 0.639 \\
\midrule
\multirow{2}{*}{\makecell{GraphWiz- \\ LLaMA2-13B}} & Vanilla & 0.891 & 0.891 & 0.883 & 0.554 & 0.246 & 0.546 & 0.423 & 0.494 & 0.809 & 0.637 \\
                         & \textsc{Slash} & \textbf{0.931} & \textbf{0.894} & 0.883 & 0.554 & \textbf{0.249} & 0.450 & \textbf{0.443} & \textbf{0.620} & 0.754 & \textbf{0.642} \\
\bottomrule
\end{tabular}
}
\end{table*}

\subsection{Main Results (RQ1)}
To demonstrate the effectiveness of \textsc{Slash}, we present the main results on both pure graph tasks (GraphInstruct) and real-world property prediction (MolecularNet).

\textbf{Performance on GraphInstruct.} Table~\ref{tab:graphwiz_results} shows that \textsc{Slash} significantly improves the performance of general-purpose LLMs. In contrast, it provides only marginal gains for models already fine-tuned on this domain. We suggest this is because the fine-tuning process itself optimizes structural attention, achieving a topological alignment similar to what \textsc{Slash} induces at inference-time. Consequently, our training-free method offers limited additional benefit.

\textbf{Performance on MolecularNet.}
Table~\ref{tab:molnet_results} shows this trend extends to MolecularNet, where \textsc{Slash} consistently enhances the performance of general-purpose LLMs on molecular property prediction. Consistent with our hypothesis, the gain is minimal for the domain-specialized MolecularGPT. This result validates that \textsc{Slash} is effective for real-world prediction tasks by sharpening the model's understanding of explicit graph topology.

\begin{table}[t]
\centering
\caption{Accuracy on the MolecularNet benchmark. \textsc{Slash} enhances the performance of general-purpose LLMs, while the domain-specialized MolecularGPT shows minimal change.}
\label{tab:molnet_results}
\resizebox{\columnwidth}{!}{
\begin{tabular}{llccccc}
\toprule
\textbf{Model} & \textbf{Method} & \textbf{BACE} & \textbf{BBBP} & \textbf{ClinTox} & \textbf{HIV} & \textbf{Tox21} \\
\midrule
\multirow{2}{*}{Llama-3.2-3B} & Vanilla & 0.085 & 0.495 & 0.675 & 0.932 & 0.418 \\
                         & \textsc{Slash} & \textbf{0.493} & 0.495 & \textbf{0.793} & 0.932 & \textbf{0.500} \\
\midrule
\multirow{2}{*}{Llama-3.1-8B} & Vanilla & 0.335 & 0.430 & 0.875 & 0.932 & 0.510 \\
                         & \textsc{Slash} & 0.335 & \textbf{0.560} & 0.875 & 0.932 & \textbf{0.518} \\
\midrule
\multirow{2}{*}{Qwen3-4B} & Vanilla & 0.230 & 0.315 & 0.578 & 0.113 & 0.488 \\
                         & \textsc{Slash} & \textbf{0.525} & \textbf{0.420} & \textbf{0.843} & \textbf{0.935} & 0.488 \\
\midrule
\multirow{2}{*}{Qwen3-8B} & Vanilla & 0.233 & 0.060 & 0.018 & 0.003 & 0.003 \\
                         & \textsc{Slash} & \textbf{0.280} & \textbf{0.430} & \textbf{0.855} & \textbf{0.670} & \textbf{0.485} \\
\midrule
\multirow{2}{*}{Qwen3-14B} & Vanilla & 0.345 & 0.430 & 0.875 & 0.932 & 0.513 \\
                         & \textsc{Slash} & \textbf{0.615} & 0.430 & 0.875 & 0.932 & 0.497 \\
\midrule
\multirow{2}{*}{MolecularGPT} & Vanilla & 0.650 & 0.430 & 0.200 & 0.068 & 0.470 \\
                         & \textsc{Slash} & 0.650 & 0.430 & 0.200 & 0.068 & \textbf{0.480} \\
\bottomrule
\end{tabular}
}
\end{table}

\subsection{Hyperparameter and Strategy Analysis (RQ2)}
\label{sec:analysis_design}
We now analyze the key design choices for \textsc{Slash}: the sharpening factor $\gamma$, the intervention granularity, and the input aggregation protocol.

\textbf{Sensitivity to Sharpening Factor $\gamma$.}
Figure~\ref{fig:gamma} illustrates performance sensitivity to the sharpening factor $\gamma$ on representative tasks; we provide comprehensive results across all benchmarks in Appendix~\ref{app:gamma_sensitivity}. General-purpose LLMs exhibit model-specific sensitivity, with performance trends varying across models. This confirms that an optimal $\gamma \in (0,1)$ exists and validates our calibration strategy. In contrast, the performance of the fine-tuned GraphWiz model remains stable, which is consistent with our hypothesis that fine-tuning has already optimized the model's structural attention.

\textbf{Head-level vs. Layer-level Intervention.} We compare two intervention strategies: applying \textsc{Slash} to identified heads ($\mathcal{H}_{\text{topo}}$), or to all heads within the layers containing them.
Table~\ref{tab:head_layer} shows layer-level intervention consistently provides superior and more stable performance, whereas the granular head-level approach can degrade it. We hypothesize this is because a Transformer layer acts as a holistic computational unit. A head-level intervention disrupts the collaborative mechanism between heads, compromising the layer's overall information flow, leading to sub-optimal performance.

\begin{table}[t]
\centering
\caption{Head-level vs. Layer-level Intervention. Layer-level intervention provides consistent gains, whereas the non-uniform, head-level approach can be unstable and degrade performance.}
\label{tab:head_layer}
\resizebox{\columnwidth}{!}{
\begin{tabular}{lclccccc}
\toprule
\textbf{Model} & \textbf{Method} & \textbf{Cyc.} & \textbf{Bip.} & \textbf{Ham.} & \textbf{BBBP} & \textbf{Tox21} \\
\midrule
\multirow{3}{*}{\makecell{Llama- \\ 3.1-8B}} &
Vanilla & 0.660 & 0.235 & 0.305 & 0.430 & 0.510 \\
& \makecell{\textsc{Slash} (Head)} & 0.670 & 0.325 & \textbf{0.318} & 0.430 & 0.513 \\
& \makecell{\textsc{Slash} (Layer)} & \textbf{0.680} & \textbf{0.547} & 0.315 & \textbf{0.560} & \textbf{0.518} \\
\midrule
\multirow{3}{*}{\makecell{Qwen3 \\ -8B}} &
Vanilla & 0.395 & 0.177 & 0.177 & 0.060 & 0.003 \\
& \makecell{\textsc{Slash} (Head)} & 0.143 & 0.098 & 0.175 & 0.003 & 0.000 \\
& \makecell{\textsc{Slash} (Layer)} & \textbf{0.905} & \textbf{0.570} & \textbf{0.318} & \textbf{0.430} & \textbf{0.485} \\
\bottomrule
\end{tabular}
}
\end{table}

\textbf{Input Aggregation Analysis.} Table~\ref{tab:agg_results} evaluates the impact of the Source-Node Aggregation protocol at inference time. The results yield two insights. First, aggregation improves performance for the baseline and \textsc{Slash}-enhanced models, indicating that a structured input simplifies the task. Second, more critically, \textsc{Slash} remains effective on non-aggregated inputs. This demonstrates that the heads identified offline are fundamental to structural reasoning, not mere detectors of a specific pattern induced by aggregation. Our offline protocol uses aggregation as a proxy to find these heads, but the online sharpening is robust to input permutation because it amplifies the intrinsic structural signal. We additionally report fine-tuned model results under non-aggregated inputs in Appendix~\ref{app:finetuned_woagg}, which show a consistent pattern.

\subsection{Ablation Study (RQ3)}
To validate our two-stage identification method, we conduct an ablation study within the superior layer-level strategy. We compare our full \textsc{Slash} method against three variants: one targeting all layers (All Layers), one using only the entropy criterion (Entropy-only), and one using only the attention concentration criterion (Struct-only). Table~\ref{tab:ablation} shows the full method provides the most robust performance. This result validates our design, as each component addresses a critical failure mode: entropy filtering alone selects active but structurally-unfocused heads, while concentration filtering alone risks selecting inactive ones. The combination is therefore essential to identify layers containing computationally active and topologically aligned heads.

\begin{figure}[tbp]
  \centering
  \includegraphics[width=\linewidth]{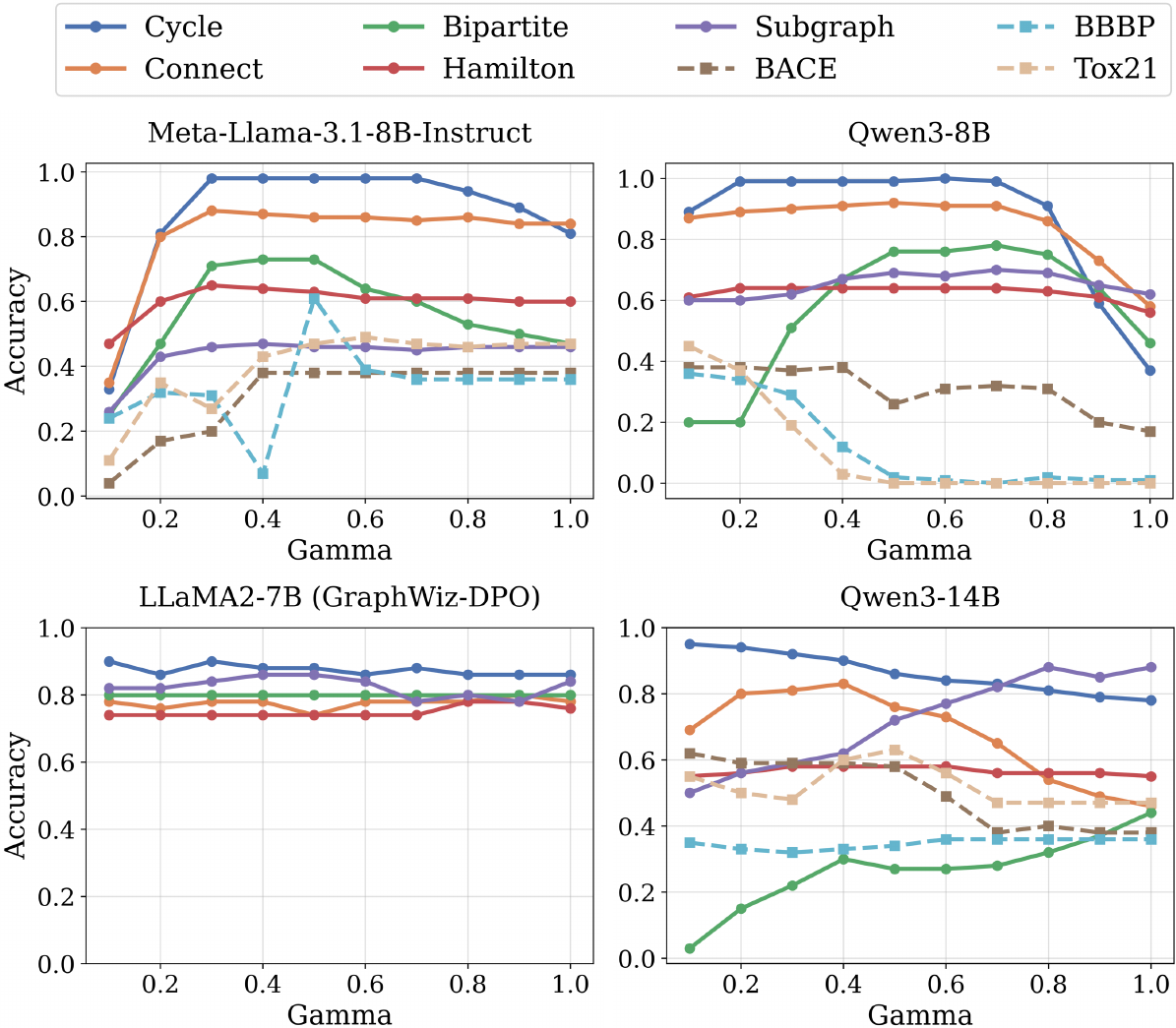}
  \caption{Performance sensitivity to the sharpening factor $\gamma$. General-purpose LLMs show model-specific sensitivity, while the fine-tuned model's performance remains stable.}
  \label{fig:gamma}
\end{figure}

\begin{table}[t]
\centering
\caption{Impact of Source-Node Aggregation at inference. \textsc{Slash} provides robust performance gains regardless of edge ordering.}
\label{tab:agg_results}
\resizebox{\columnwidth}{!}{
\begin{tabular}{lclccccc}
\toprule
\textbf{Model} & \textbf{Agg.} & \textbf{Method} & \textbf{Cyc.} & \textbf{Bip.} & \textbf{Ham.} & \textbf{BBBP} & \textbf{Tox21} \\
\midrule
\multirow{4}{*}{\makecell{Llama- \\ 3.1-8B}} &
\multirow{2}{*}{\makecell{w/o}} & Vanilla & 0.565 & 0.243 & 0.298 & 0.430 & 0.513 \\
& & \textsc{Slash} & \textbf{0.668} & \textbf{0.548} & \textbf{0.320} & \textbf{0.625} & \textbf{0.520} \\
\cmidrule(lr){2-8}
& \multirow{2}{*}{\makecell{with}} & Vanilla & 0.663 & 0.293 & 0.295 & 0.430 & 0.510 \\
& & \textsc{Slash} & \textbf{0.680} & \textbf{0.630} & \textbf{0.318} & \textbf{0.430} & \textbf{0.523} \\
\midrule
\multirow{4}{*}{\makecell{Qwen3 \\ -8B}} &
\multirow{2}{*}{\makecell{w/o}} & Vanilla & 0.315 & 0.175 & 0.155 & 0.023 & 0.000 \\
& & \textsc{Slash} & \textbf{0.843} & \textbf{0.593} & \textbf{0.318} & \textbf{0.430} & \textbf{0.488} \\
\cmidrule(lr){2-8}
& \multirow{2}{*}{\makecell{with}} & Vanilla & 0.380 & 0.190 & 0.193 & 0.060 & 0.005 \\
& & \textsc{Slash} & \textbf{0.898} & \textbf{0.535} & \textbf{0.318} & \textbf{0.430} & \textbf{0.498} \\
\bottomrule
\end{tabular}
}
\end{table}

\begin{table}[t]
\centering
\caption{Ablation of identification components within the layer-level strategy. The full method, combining both entropy and concentration filtering, provides the most robust performance.}
\label{tab:ablation}
\resizebox{\columnwidth}{!}{
\begin{tabular}{lclccccc}
\toprule
\textbf{Model} & \textbf{Method} & \textbf{Cyc.} & \textbf{Bip.} & \textbf{Ham.} & \textbf{BBBP} & \textbf{Tox21} \\
\midrule
\multirow{4}{*}{\makecell{Llama- \\ 3.1-8B}} &
All Layers & 0.675 & 0.008 & 0.068 & \textbf{0.570} & 0.513 \\
& Entropy-only & 0.640 & 0.288 & 0.243 & 0.540 & 0.503 \\
& Struct-only & 0.678 & 0.480 & 0.010 & 0.430 & 0.513 \\
\cmidrule(lr){2-7}
& \textsc{Slash} (Ours) & \textbf{0.680} & \textbf{0.547} & \textbf{0.315} & 0.560 & \textbf{0.518} \\
\midrule
\multirow{4}{*}{\makecell{Qwen3 \\ -8B}} &
All Layers & 0.810 & \textbf{0.638} & 0.318 & 0.330 & 0.195 \\
& Entropy-only & 0.410 & 0.175 & 0.175 & 0.415 & 0.188 \\
& Struct-only & \textbf{0.918} & 0.603 & 0.318 & 0.420 & 0.303 \\
\cmidrule(lr){2-7}
& \textsc{Slash} (Ours) & 0.905 & 0.570 & \textbf{0.318} & \textbf{0.430} & \textbf{0.485} \\
\midrule
\multirow{4}{*}{\makecell{Qwen3 \\ -14B}} &
All Layers & 0.545 & 0.138 & 0.230 & 0.423 & 0.510 \\
& Entropy-only & 0.628 & \textbf{0.168} & 0.218 & 0.430 & \textbf{0.513} \\
& Struct-only & 0.693 & 0.105 & 0.243 & 0.113 & 0.510 \\
\cmidrule(lr){2-7}
& \textsc{Slash} (Ours) & \textbf{0.740} & 0.155 & \textbf{0.247} & \textbf{0.430} & 0.497 \\
\bottomrule
\end{tabular}
}
\end{table}

\subsection{Case Studies and Limitations (RQ4)}

Figure~\ref{fig:case} illustrates how \textsc{Slash} corrects reasoning failures. The vanilla model fabricates a path in its textual response to incorrectly answer `Yes'. In contrast, \textsc{Slash} provides the correct `No' answer. This demonstrates that our method prevents structural hallucinations by enhancing the model's topological understanding.

We also examine specific limitations to understand where \textsc{Slash} struggles. On the Graph-SST2~\citep{DBLP:journals/pami/YuanYGJ23} sentiment task, where syntax trees form the graph structure, \textsc{Slash} yields unstable results: it degrades Llama-3.1-8B's performance (0.638 → 0.583) while improving Qwen3-8B's (0.513 → 0.613). We attribute this to the task's nature, where sentiment depends on word semantics, not syntactic topology. Therefore, amplifying structural signals can be counterproductive when the graph structure is irrelevant to the task, delineating the scope of \textsc{Slash}'s applicability.

Complementarily, Appendix~\ref{app:extended_eval} evaluates \textsc{Slash} on 
broader graph settings---node classification, link prediction, and graph 
QA---where consistent improvements confirm its generalizability to tasks in 
which topological structure drives the reasoning process.

\begin{figure}[htbp]
  \centering
  \includegraphics[width=\linewidth]{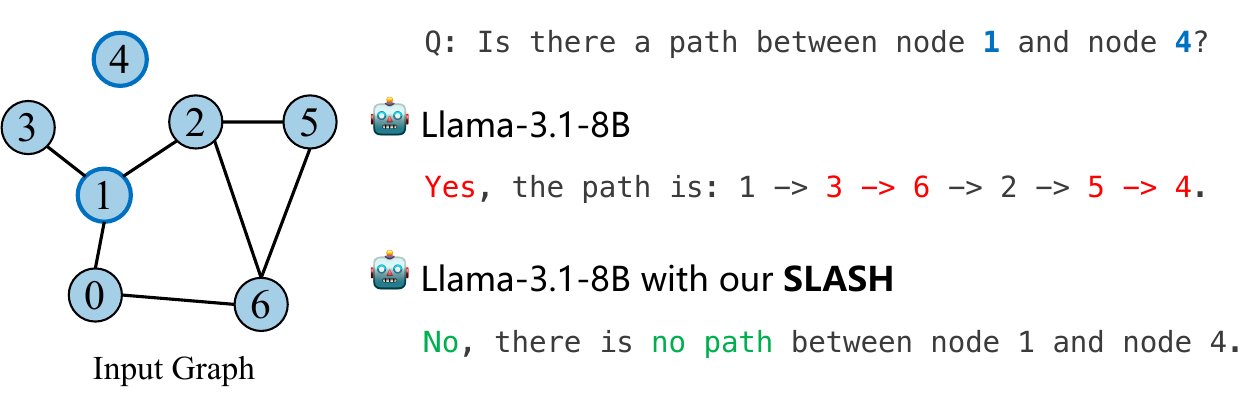}
  \caption{Case study on connectivity detection. The vanilla model incorrectly answers `Yes' by generating a hallucinated path. \textsc{Slash} prevents hallucinations, ensuring a grounded `No'.}
  \label{fig:case}
\end{figure}

\section{Conclusion}
In this paper, we reveal LLMs' latent ability to reconstruct graph topology, identifying the conflict between semantic anisotropy and topology-aware local aggregation as the bottleneck. We propose \textsc{Slash}, a training-free attention mechanism that resolves this bottleneck, achieving significant performance gains on both pure graph and molecular prediction tasks. For future work, we will investigate incorporating this principle into model training to further improve LLMs' understanding of structured data and study the theoretical underpinnings of the conflict.

\section*{Acknowledgements}
This work was supported by National Natural Science Foundation of China (No. T2421002, 62602003, 62272293) and Postdoctoral Fellowship Program of CPSF (No. GZB20250806).

\section*{Impact Statement}

This paper presents work whose goal is to advance the field of Machine
Learning. There are many potential societal consequences of our work, none
which we feel must be specifically highlighted here.


\bibliography{references}
\bibliographystyle{icml2026}

\newpage
\appendix
\onecolumn

\section{Omitted Proofs}
\label{app:proofs}

\subsection{Proof of Theorem~\ref{thm:contraction}}
\label{app:proof_contraction}
\begin{proof}
Let $\mathbf{h}_k$ and $\mathbf{h}_l$ be the final representations for two nodes, and $\mathbf{h}_k^{\text{topo}}$, $\mathbf{h}_l^{\text{topo}}$ be their pure topological representations. Assuming a uniform sink weight $\lambda_k \approx \lambda_l \approx \lambda$, we start from Eq.~\eqref{eq:convex_combo}:
\begin{align*}
    \mathbf{h}_k &= \lambda \mathbf{v}_0 + (1 - \lambda) \mathbf{h}_k^{\text{topo}} \\
    \mathbf{h}_l &= \lambda \mathbf{v}_0 + (1 - \lambda) \mathbf{h}_l^{\text{topo}}
\end{align*}
Subtracting the two equations gives:
\begin{equation*}
    \mathbf{h}_k - \mathbf{h}_l = (\lambda \mathbf{v}_0 + (1 - \lambda) \mathbf{h}_k^{\text{topo}}) - (\lambda \mathbf{v}_0 + (1 - \lambda) \mathbf{h}_l^{\text{topo}}) = (1 - \lambda)(\mathbf{h}_k^{\text{topo}} - \mathbf{h}_l^{\text{topo}}).
\end{equation*}
Taking the Euclidean norm of both sides, we get:
\begin{equation*}
    \| \mathbf{h}_k - \mathbf{h}_l \| = \| (1 - \lambda)(\mathbf{h}_k^{\text{topo}} - \mathbf{h}_l^{\text{topo}}) \| = (1 - \lambda) \| \mathbf{h}_k^{\text{topo}} - \mathbf{h}_l^{\text{topo}} \|.
\end{equation*}
\end{proof}

\subsection{Proof of Proposition~\ref{prop:energy_decay}}
\label{app:proof_decay}
\begin{proof}
The Dirichlet Energy of the system is defined as $E_{\text{Dir}}(\mathbf{H}) = \text{tr}(\mathbf{H}^\top \mathbf{L} \mathbf{H})$.
In matrix form, the simplified model from Eq.~\eqref{eq:convex_combo} can be written as:
\begin{equation*}
    \mathbf{H} \approx (1-\lambda)\mathbf{H}^{\text{topo}} + \lambda \mathbf{1}\mathbf{v}_0^\top,
\end{equation*}
where $\mathbf{1}$ is a column vector of ones.
We substitute this into the energy function:
\begin{equation*}
    E_{\text{Dir}}(\mathbf{H}) \approx \text{tr}\left( \left((1-\lambda)\mathbf{H}^{\text{topo}} + \lambda \mathbf{1}\mathbf{v}_0^\top\right)^\top \mathbf{L} \left((1-\lambda)\mathbf{H}^{\text{topo}} + \lambda \mathbf{1}\mathbf{v}_0^\top\right) \right).
\end{equation*}
A fundamental property of the graph Laplacian $\mathbf{L}$ is that it has a null space spanned by the vector of all ones, meaning $\mathbf{L1} = \mathbf{0}$. Consequently, the cross-terms and the sink-only term involving $\mathbf{L}(\mathbf{1}\mathbf{v}_0^\top)$ become zero. The expression simplifies to:
\begin{equation*}
    E_{\text{Dir}}(\mathbf{H}) \approx \text{tr}\left( \left((1-\lambda)\mathbf{H}^{\text{topo}}\right)^\top \mathbf{L} \left((1-\lambda)\mathbf{H}^{\text{topo}}\right) \right).
\end{equation*}
Since $(1-\lambda)$ is a scalar, we can factor it out:
\begin{equation*}
    E_{\text{Dir}}(\mathbf{H}) \approx (1-\lambda)^2 \text{tr}\left( (\mathbf{H}^{\text{topo}})^\top \mathbf{L} \mathbf{H}^{\text{topo}} \right) = (1-\lambda)^2 E_{\text{Dir}}(\mathbf{H}^{\text{topo}}).
\end{equation*}
\end{proof}

\subsection{Proof of Theorem~\ref{thm:expansion}}
\label{app:proof_expansion}
\begin{proof}
The new representation $\mathbf{h}'_i$ under sharpening with factor $\gamma$ has a new sink weight $\lambda'_i = \gamma \lambda_i$. Following Eq.~\eqref{eq:convex_combo}, this gives $\mathbf{h}'_i = \gamma\lambda_i \mathbf{v}_0 + (1-\gamma\lambda_i) \mathbf{h}_i^{\text{topo}}$.
The distance between two nodes becomes:
\begin{equation*}
    \|\mathbf{h}'_k - \mathbf{h}'_l\| = (1 - \gamma\lambda) \|\mathbf{h}_k^{\text{topo}} - \mathbf{h}_l^{\text{topo}}\|.
\end{equation*}
From Theorem~\ref{thm:contraction}, we know $\|\mathbf{h}_k - \mathbf{h}_l\| = (1 - \lambda) \|\mathbf{h}_k^{\text{topo}} - \mathbf{h}_l^{\text{topo}}\|$.
The ratio of the new distance to the old distance is therefore:
\begin{equation*}
    \frac{\|\mathbf{h}'_k - \mathbf{h}'_l\|}{\|\mathbf{h}_k - \mathbf{h}_l\|} = \frac{(1 - \gamma\lambda) \|\mathbf{h}_k^{\text{topo}} - \mathbf{h}_l^{\text{topo}}\|}{(1 - \lambda) \|\mathbf{h}_k^{\text{topo}} - \mathbf{h}_l^{\text{topo}}\|} = \frac{1 - \gamma\lambda}{1 - \lambda}.
\end{equation*}
\end{proof}

\subsection{Derivation of Proposition~\ref{prop:amplification}}
\label{app:proof_amplification}
\begin{proof}
The new representation matrix is $\mathbf{H}' \approx (1-\gamma\lambda)\mathbf{H}^{\text{topo}} + \gamma\lambda \mathbf{1}\mathbf{v}_0^\top$.
Its Dirichlet Energy is $E'_{\text{Dir}}(\mathbf{H}) = \text{tr}((\mathbf{H}')^\top \mathbf{L} \mathbf{H}')$. As established in the proof of Prop.~\ref{prop:energy_decay}, the sink term is in the null space of $\mathbf{L}$.
Thus, the energy becomes:
\begin{equation*}
    E'_{\text{Dir}}(\mathbf{H}) \approx (1-\gamma\lambda)^2 E_{\text{Dir}}(\mathbf{H}^{\text{topo}}).
\end{equation*}
From Prop.~\ref{prop:energy_decay}, we have $E_{\text{Dir}}(\mathbf{H}) \approx (1-\lambda)^2 E_{\text{Dir}}(\mathbf{H}^{\text{topo}})$, which implies $E_{\text{Dir}}(\mathbf{H}^{\text{topo}}) \approx \frac{E_{\text{Dir}}(\mathbf{H})}{(1-\lambda)^2}$.
Substituting this back, we get:
\begin{equation*}
    E'_{\text{Dir}}(\mathbf{H}) \approx (1-\gamma\lambda)^2 \frac{E_{\text{Dir}}(\mathbf{H})}{(1-\lambda)^2} = \left(\frac{1 - \gamma\lambda}{1 - \lambda}\right)^2 E_{\text{Dir}}(\mathbf{H}).
\end{equation*}
\end{proof}

\section{LLM Attention Entropy Distribution}
\label{app:entropy}
\begin{figure}[htbp]
  \centering
  \includegraphics[width=\linewidth]{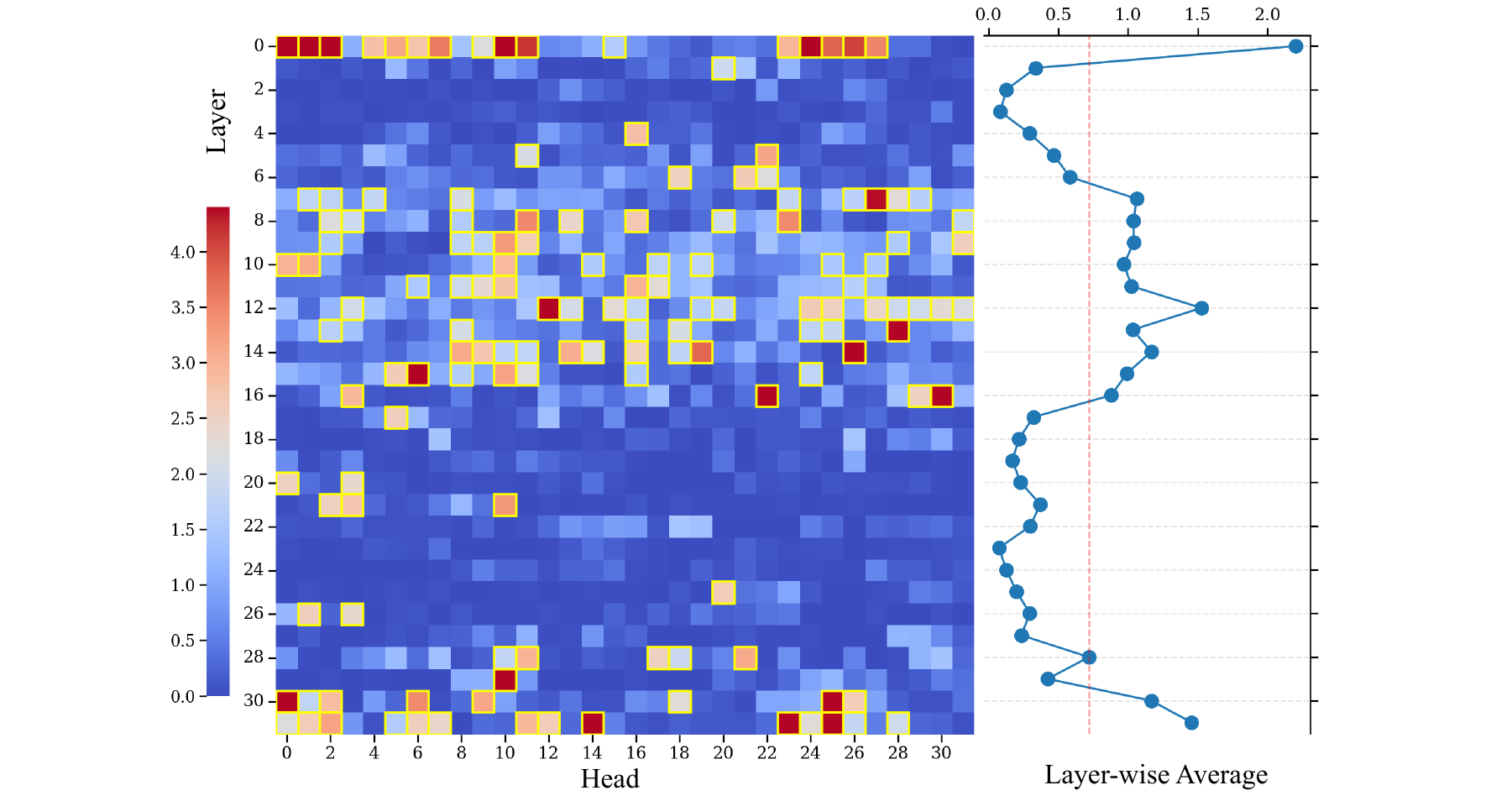}
  \caption{Entropy distribution in Llama-3.1-8B. Active heads (high entropy) are concentrated in intermediate layers. (Left) Per-head entropy heatmap with active heads boxed. (Right) Layer-averaged entropy plot, where an automatic threshold (dashed line) isolates the active peak.}
  \label{fig:Entropy}
\end{figure}

\section{Computational Environment and Overhead}
\label{app:compute_details}

\textbf{Computational Environment.} Experiments were conducted on a server with eight NVIDIA RTX 4090 GPUs, using PyTorch 2.3.1 and the Hugging Face Transformers library (v4.41.3).

\textbf{Overhead Analysis.} As a training-free method, \textsc{Slash} introduces zero additional parameters. Its computational overhead is minimal:

\begin{itemize}
    \item \textbf{Offline Phase Overhead:} This one-time, per-model cost is dominated by the $\gamma$ calibration, which requires 10 inference runs on a small calibration set (100 samples). The head identification process adds a cost roughly equivalent to a single inference run on the same data.

    \item \textbf{Online Phase Overhead:} The arithmetic of the intervention itself is highly efficient, adding negligible latency per token. However, our implementation requires access to the full attention matrix and is thus not compatible with standard acceleration kernels like FlashAttention.
\end{itemize}

To quantify this cost, we measured peak GPU memory using Meta-Llama-3.1-8B-Instruct in 4-bit quantization under the same inference setup, as shown in Table~\ref{tab:memory_overhead}.

\begin{table}[h]
\centering
\caption{Peak GPU memory (MB) under different attention implementations.}
\label{tab:memory_overhead}
\begin{tabular}{lcccc}
\toprule
\textbf{Setting} & \textbf{Tokens} & \textbf{Eager Peak} & \textbf{SLASH Eager Peak} & \textbf{FlashAttn2 Peak} \\
\midrule
Real sample (39 nodes) & 2416 & 15855.5 & 15856.7 & 12161.3 \\
Synthetic (100 nodes) & 6942 & 36111.6 & 36111.7 & 13312.7 \\
\bottomrule
\end{tabular}
\end{table}

The sharpening rule itself adds virtually no overhead beyond standard eager attention. The remaining gap between eager and FlashAttention modes is inherent to full attention-map materialization and is not specific to our method.

\section{Empirical Analysis of the Residual Term}
\label{app:noise_analysis}

To support the approximation in Eq.~\eqref{eq:simplified_model}, we report the empirical noise ratio (the fraction of attention mass in the residual term) across models and tasks, as shown in Table~\ref{tab:noise_ratio}.

\begin{table}[h]
\centering
\caption{Noise ratios across models and tasks.}
\label{tab:noise_ratio}
\begin{tabular}{lccccc}
\toprule
\textbf{Model} & \textbf{Cyc.} & \textbf{Bip.} & \textbf{Ham.} & \textbf{BBBP} & \textbf{Tox21} \\
\midrule
Llama-3.1-8B & 0.106 & 0.097 & 0.114 & 0.111 & 0.110 \\
Qwen3-8B & 0.189 & 0.174 & 0.197 & 0.218 & 0.209 \\
\bottomrule
\end{tabular}
\end{table}

We also tested the effect of removing the residual term at inference time on the Cycle task, as reported in Table~\ref{tab:remove_noise}.

\begin{table}[h]
\centering
\caption{Impact of removing the residual term at inference.}
\label{tab:remove_noise}
\begin{tabular}{lccc}
\toprule
\textbf{Model} & \textbf{Vanilla} & \textbf{SLASH} & \textbf{SLASH (remove noise)} \\
\midrule
Llama-3.1-8B & 0.660 & 0.680 & 0.686 \\
Qwen3-8B & 0.395 & 0.905 & 0.838 \\
\bottomrule
\end{tabular}
\end{table}

These results confirm that the residual term is consistently small, supporting the approximation in Eq.~\eqref{eq:simplified_model}. Explicitly removing it at inference does not always improve performance, indicating that it carries some useful signal despite its small magnitude.

\section{Head Selection: Top-$k$ Binarization vs.\ Direct Soft Alignment}
\label{app:head_selection_ablation}

We compare our proposed top-$k$ binarization with morphological closing against a direct soft-alignment alternative for topology-aware head selection. Results are shown in Table~\ref{tab:head_selection_compare}.

\begin{table}[h]
\centering
\caption{Comparison of head selection methods.}
\label{tab:head_selection_compare}
\begin{tabular}{llccccc}
\toprule
\textbf{Model} & \textbf{Method} & \textbf{Cyc.} & \textbf{Bip.} & \textbf{Ham.} & \textbf{BBBP} & \textbf{Tox21} \\
\midrule
\multirow{2}{*}{Llama-3.1-8B} & Directly Soft & 0.655 & 0.353 & 0.243 & 0.410 & 0.508 \\
 & Top-$k$ \& Closing & 0.680 & 0.547 & 0.315 & 0.560 & 0.518 \\
\midrule
\multirow{2}{*}{Qwen3-8B} & Directly Soft & 0.413 & 0.170 & 0.185 & 0.370 & 0.093 \\
 & Top-$k$ \& Closing & 0.905 & 0.570 & 0.318 & 0.430 & 0.485 \\
\bottomrule
\end{tabular}
\end{table}

Direct soft alignment can have boundary failures: for example, an attention map with large values mainly on the diagonal can still score highly because many high-value entries fall inside the support of $\mathbf{M}_{gt}$, while not exhibiting the global sawtooth pattern. The proposed pipeline treats the attention map as an image-like object, providing more robust structural identification.

\section{Fine-tuned Models under Non-Aggregated Inputs}
\label{app:finetuned_woagg}

To complement the non-aggregated analysis in Table~\ref{tab:agg_results} (which covers general-purpose LLMs), we additionally evaluated fine-tuned graph models under non-aggregated (w/o Agg.) inputs, as shown in Table~\ref{tab:finetuned_woagg}.

\begin{table}[h]
\centering
\caption{Fine-tuned models under non-aggregated inputs.}
\label{tab:finetuned_woagg}
\begin{tabular}{llcc}
\toprule
\textbf{Model} & \textbf{Task} & \textbf{Vanilla (w/o Agg.)} & \textbf{SLASH (w/o Agg.)} \\
\midrule
\multirow{3}{*}{GraphWiz-LLaMA2-7B} & Cyc. & 0.857 & 0.871 \\
 & Bip. & 0.860 & 0.860 \\
 & Ham. & 0.606 & 0.609 \\
\midrule
\multirow{3}{*}{GraphWiz-Mistral-7B} & Cyc. & 0.903 & 0.894 \\
 & Bip. & 0.729 & 0.729 \\
 & Ham. & 0.257 & 0.274 \\
\midrule
\multirow{2}{*}{MolecularGPT} & BBBP & 0.430 & 0.430 \\
 & Tox21 & 0.490 & 0.505 \\
\bottomrule
\end{tabular}
\end{table}

These results show that fine-tuned models remain mostly neutral to mildly helpful under non-aggregated inputs, confirming that Source-Node Aggregation is an offline diagnostic device rather than a required online assumption.

\section{Extended Evaluation on Broader Graph Tasks}
\label{app:extended_eval}

To probe the applicability of \textsc{Slash} beyond the primary benchmarks, we conduct preliminary evaluations on three additional task families using InstructGraph\citep{wang-etal-2024-instructgraph} test data (\textasciitilde100 examples per task). For heterogeneous graphs, we serialize the relation as an edge label in the form $(i \to j, r)$. Results are reported in Table~\ref{tab:extended_eval}.

\begin{table}[h]
\centering
\caption{Extended evaluation on broader graph tasks.}
\label{tab:extended_eval}
\begin{tabular}{llccc}
\toprule
\textbf{Task} & \textbf{Dataset} & \textbf{Model} & \textbf{Vanilla} & \textbf{SLASH} \\
\midrule
\multirow{2}{*}{Node Classification} & \multirow{2}{*}{Cora} & Llama-3.1-8B & 0.900 & \textbf{0.914} \\
& & Qwen3-8B & 0.329 & \textbf{0.600} \\
\midrule
\multirow{2}{*}{Link Prediction} & \multirow{2}{*}{Wikidata5M} & Llama-3.1-8B & 0.230 & \textbf{0.260} \\
& & Qwen3-8B & 0.210 & 0.210 \\
\midrule
\multirow{2}{*}{Graph QA} & \multirow{2}{*}{PathQuestion} & Llama-3.1-8B & 0.329 & \textbf{0.357} \\
& & Qwen3-8B & 0.357 & \textbf{0.386} \\
\bottomrule
\end{tabular}
\end{table}

These results provide preliminary evidence that \textsc{Slash} can also be useful in broader graph settings. The improvements are consistent with the paper's central perspective: \textsc{Slash} tends to be helpful when topological structure plays an important role in the reasoning process.

\section{Prompt Formulation for MolecularNet}
\label{app:prompt}
This section provides the detailed prompt formulations used for the molecular property prediction tasks on the MolecularNet benchmark.

\begin{table}[ht]
\centering
\label{prompt1}
\resizebox{\textwidth}{!}{
\begin{tabular}{l}
\toprule
\textbf{Prompt for the BACE Task} \\
\midrule
You are an expert chemist. Your task is to predict the property of a molecule based on its graph structure representation. \\
- [i, w] means node i has atomic number w (e.g., 6 for Carbon, 8 for Oxygen). \\
- (i, j) means node i and node j are connected by a chemical bond. \\
Consider molecular weight, atom count, bond types, and functional groups to assess the compound's drug-likeness and \\ potential as a therapeutic agent for Alzheimer's disease. \\
Given the graph structure of a molecule, predict its molecular properties by analyzing whether it can inhibit (Yes) or cannot \\ inhibit (No) the Beta-site Amyloid Precursor Protein Cleaving Enzyme 1 (BACE1). \\
Molecular Graph: \texttt{<Molecular Graph Description>} \\
Q: Can this molecule inhibit BACE1? Answer with only ``Yes'' or ``No''. \\
A: \\
\bottomrule
\end{tabular}
}
\end{table}

\begin{table}[ht]
\centering
\label{prompt2}
\resizebox{\textwidth}{!}{
\begin{tabular}{l}
\toprule
\textbf{Prompt for the BBBP Task} \\
\midrule
You are an expert chemist. Your task is to predict the property of a molecule based on its graph structure representation. \\
- [i, w] means node i has atomic number w (e.g., 6 for Carbon, 8 for Oxygen). \\
- (i, j) means node i and node j are connected by a chemical bond. \\
Given the graph structure of a molecule, predict whether it can penetrate the blood-brain barrier (Yes) or not (No). \\
Molecular Graph: \texttt{<Molecular Graph Description>} \\
Q: Can this molecule penetrate the blood-brain barrier? Answer with only ``Yes'' or ``No''. \\
A: \\
\bottomrule
\end{tabular}
}
\end{table}

\begin{table}[ht]
\centering
\label{prompt3}
\resizebox{\textwidth}{!}{
\begin{tabular}{l}
\toprule
\textbf{Prompt for the ClinTox Task} \\
\midrule
You are an expert chemist. Your task is to predict the property of a molecule based on its graph structure representation. \\
- [i, w] means node i has atomic number w (e.g., 6 for Carbon, 8 for Oxygen). \\
- (i, j) means node i and node j are connected by a chemical bond. \\
Given the graph structure of a molecule, predict whether it is clinically trial-toxic (Yes) or not (No). \\
Molecular Graph: \texttt{<Molecular Graph Description>} \\
Q: Is this molecule clinically trial toxic? Answer with only ``Yes'' or ``No''. \\
A: \\
\bottomrule
\end{tabular}
}
\end{table}

\begin{table}[ht]
\centering
\label{prompt4}
\resizebox{\textwidth}{!}{
\begin{tabular}{l}
\toprule
\textbf{Prompt for the HIV Task} \\
\midrule
You are an expert chemist. Your task is to predict the property of a molecule based on its graph structure representation. \\
- [i, w] means node i has atomic number w (e.g., 6 for Carbon, 8 for Oxygen). \\
- (i, j) means node i and node j are connected by a chemical bond. \\
Given the graph structure of a molecule, predict whether a molecule, based on its properties and HIV activity test results, \\ can effectively inhibit HIV replication (Yes) or not (No). \\
Molecular Graph: \texttt{<Molecular Graph Description>} \\
Q: Can this molecule effectively inhibit HIV replication? Answer with only ``Yes'' or ``No''. \\
A: \\
\bottomrule
\end{tabular}
}
\end{table}

\begin{table}[ht]
\centering
\label{prompt5}
\resizebox{\textwidth}{!}{
\begin{tabular}{l}
\toprule
\textbf{Prompt for the Tox21 Task} \\
\midrule
You are an expert chemist. Your task is to predict the property of a molecule based on its graph structure representation. \\
- [i, w] means node i has atomic number w (e.g., 6 for Carbon, 8 for Oxygen). \\
- (i, j) means node i and node j are connected by a chemical bond. \\
Given the graph structure of a molecule, predict whether it is toxic (Yes) or not toxic (No). \\
Molecular Graph: \texttt{<Molecular Graph Description>} \\
Q: Is this molecule toxic? Answer with only ``Yes'' or ``No''. \\
A: \\
\bottomrule
\end{tabular}
}
\end{table}

\section{Sensitivity Analysis of Sharpening Factor $\gamma$}
\label{app:gamma_sensitivity}

\begin{figure}[h]
  \centering
  \caption{Sensitivity analysis of $\gamma$ on the GraphInstruct dataset.}
  \includegraphics[width=\linewidth]{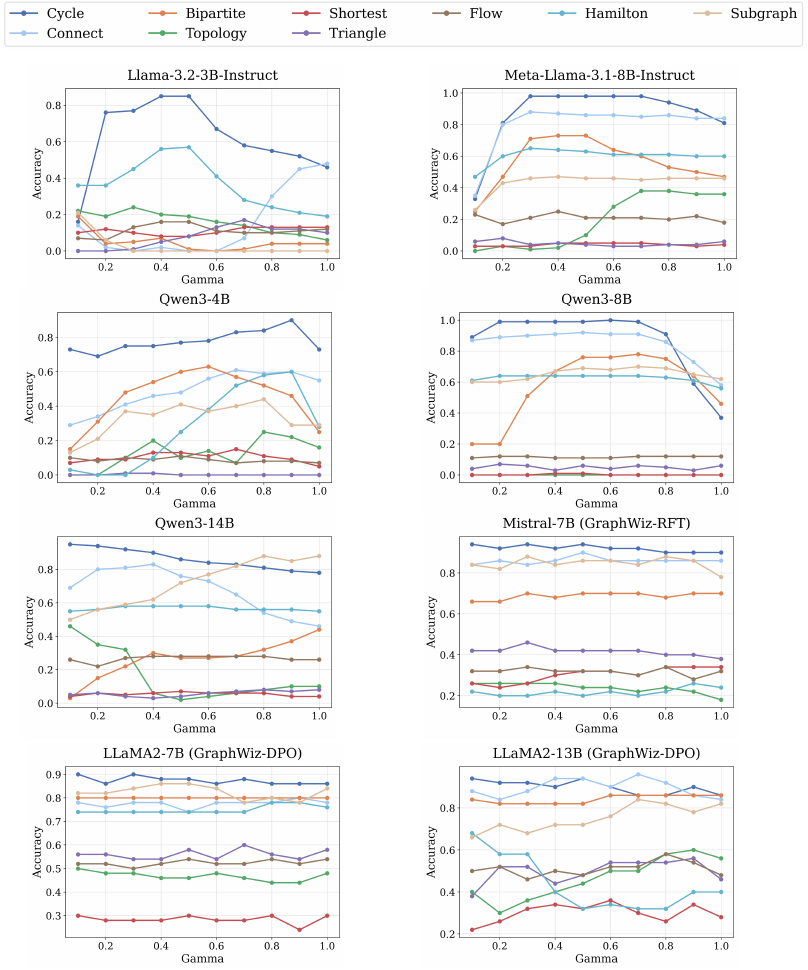}
  \label{fig:GraphWiz_gamma}
\end{figure}

\begin{figure}[h]
  \centering
  \caption{Sensitivity analysis of $\gamma$ on the MolecularNet dataset.}
  \includegraphics[width=\linewidth]{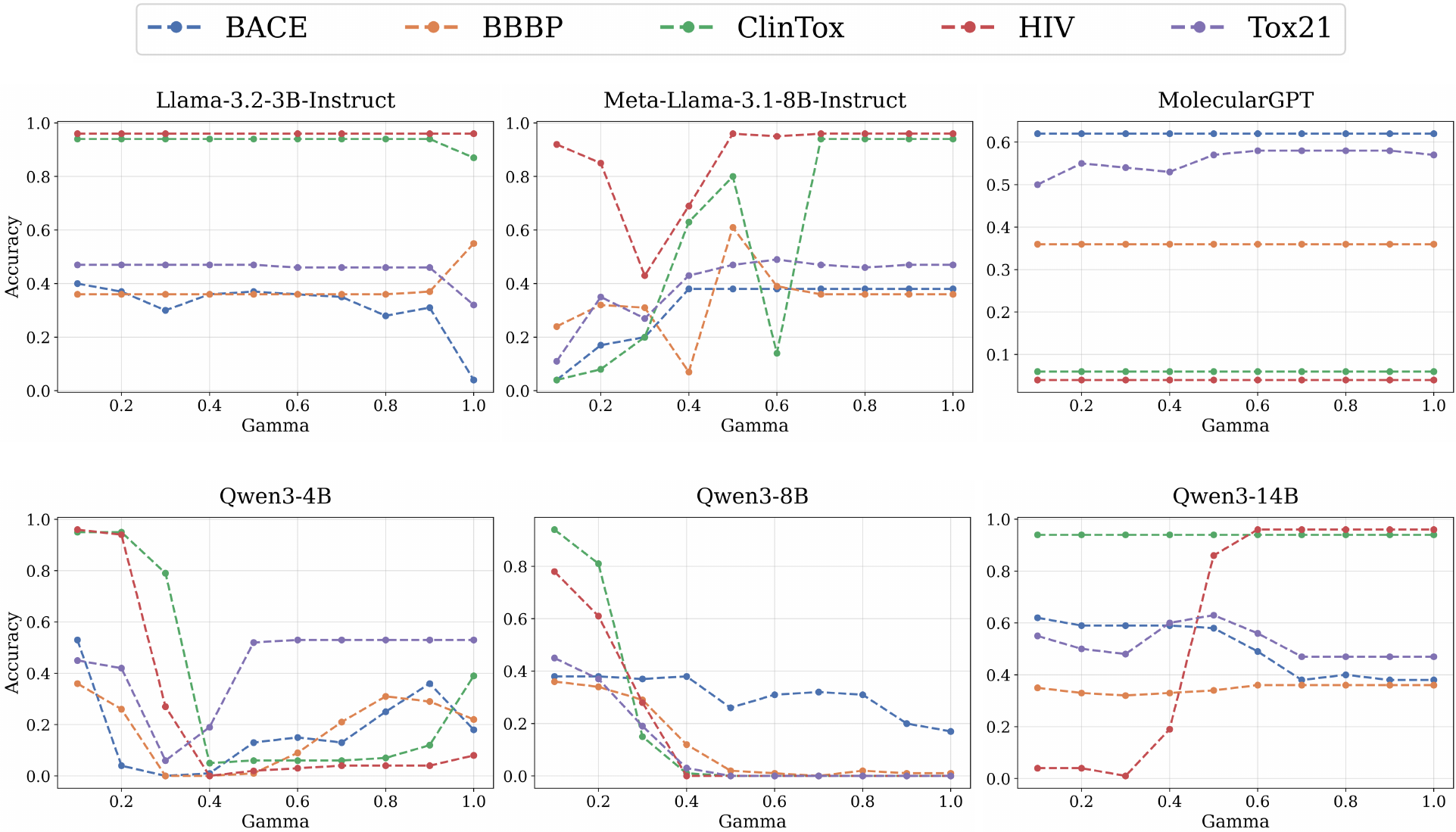}
  \label{fig:molecularNet_gamma}
\end{figure}

\section{Empirical Observation}
\label{app:observation}

\begin{figure}[htbp]
  \centering
  \caption{The attention map of Llama-3.2-3B.}
  \includegraphics[width=\linewidth]{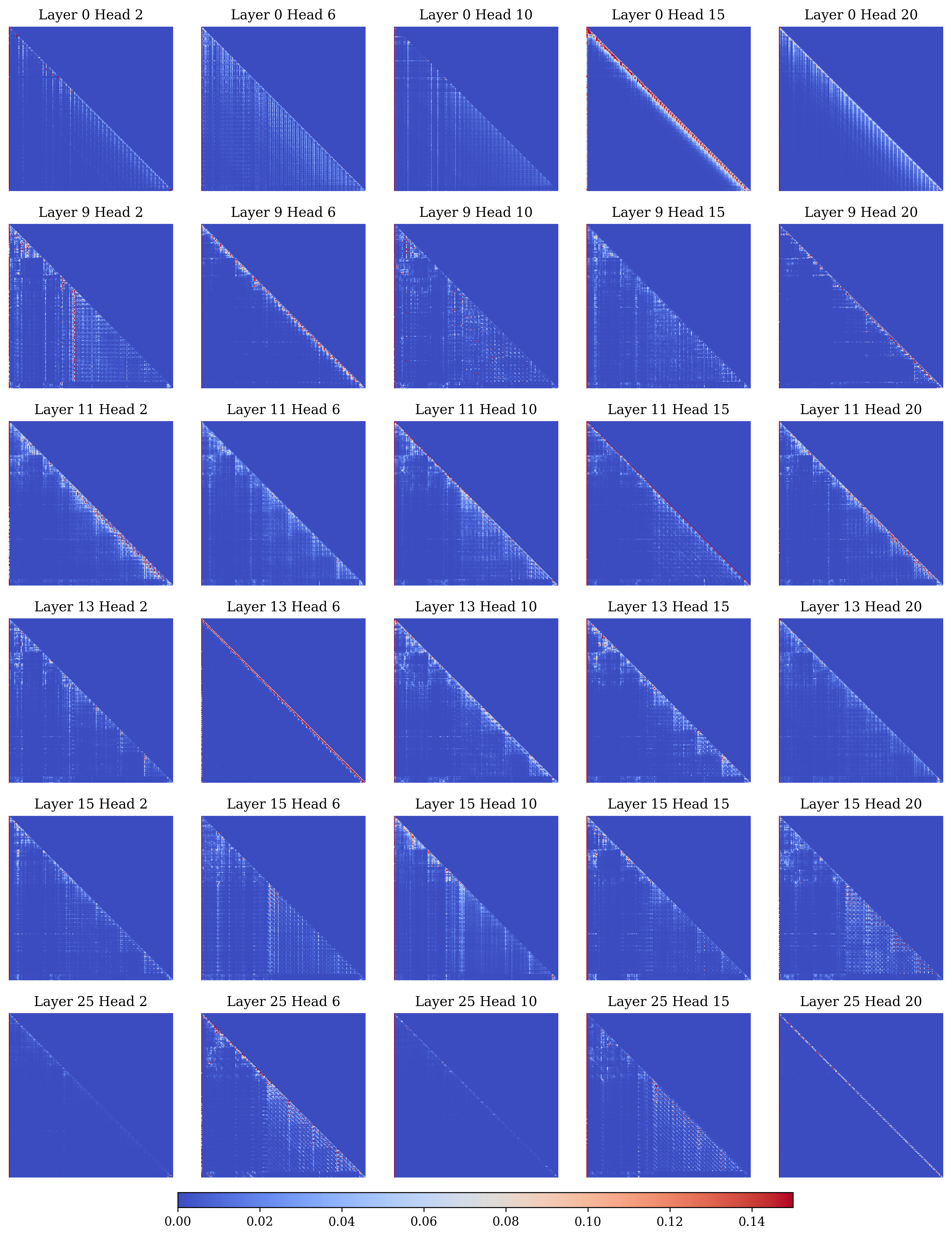}
  \label{fig:llama3-3}
\end{figure}

\begin{figure}[htbp]
  \centering
  \caption{The attention map of Llama-3.1-8B.}
  \includegraphics[width=\linewidth]{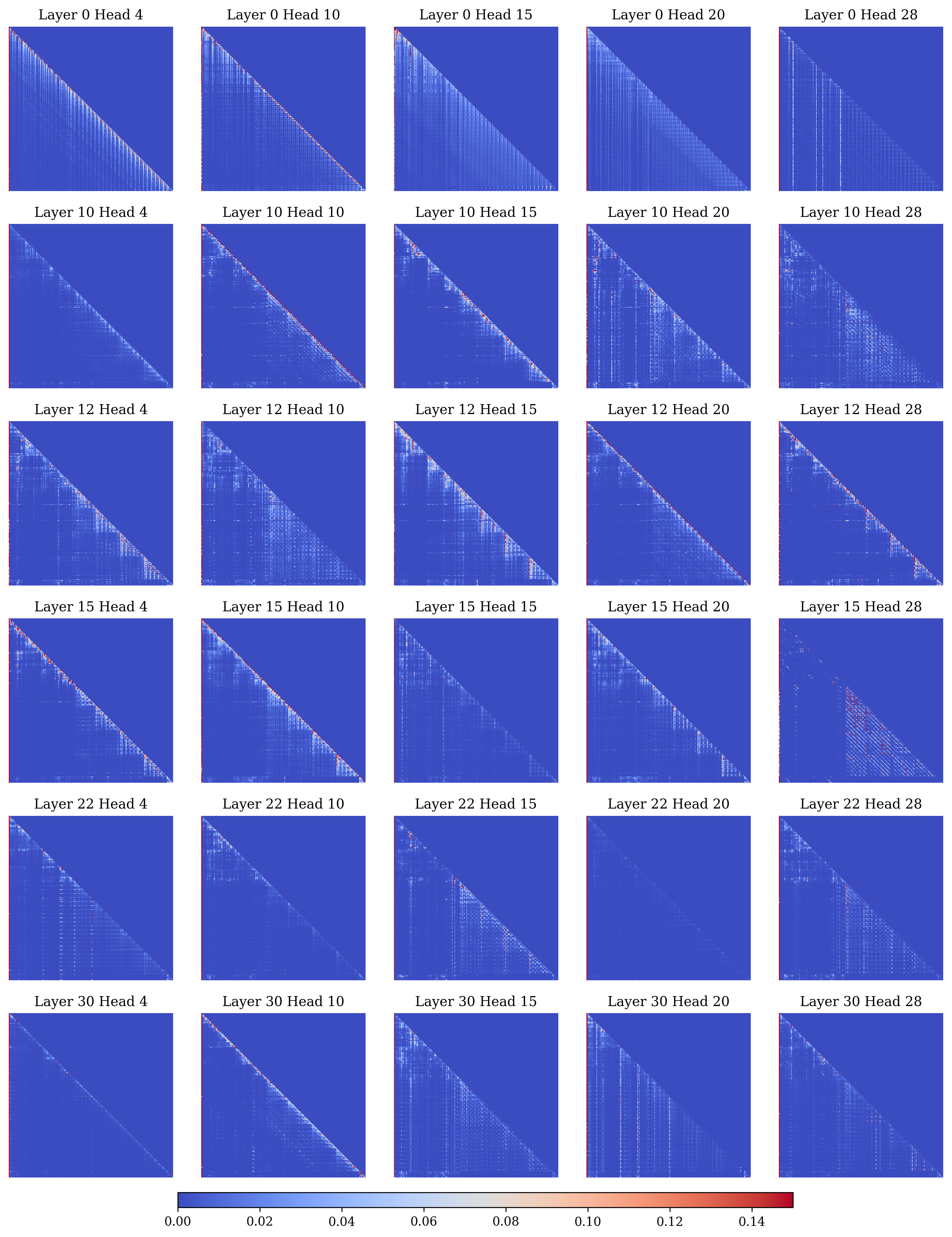}
  \label{fig:llama3-8}
\end{figure}

\begin{figure}[htbp]
  \centering
  \caption{The attention map of Qwen3-4B.}
  \includegraphics[width=\linewidth]{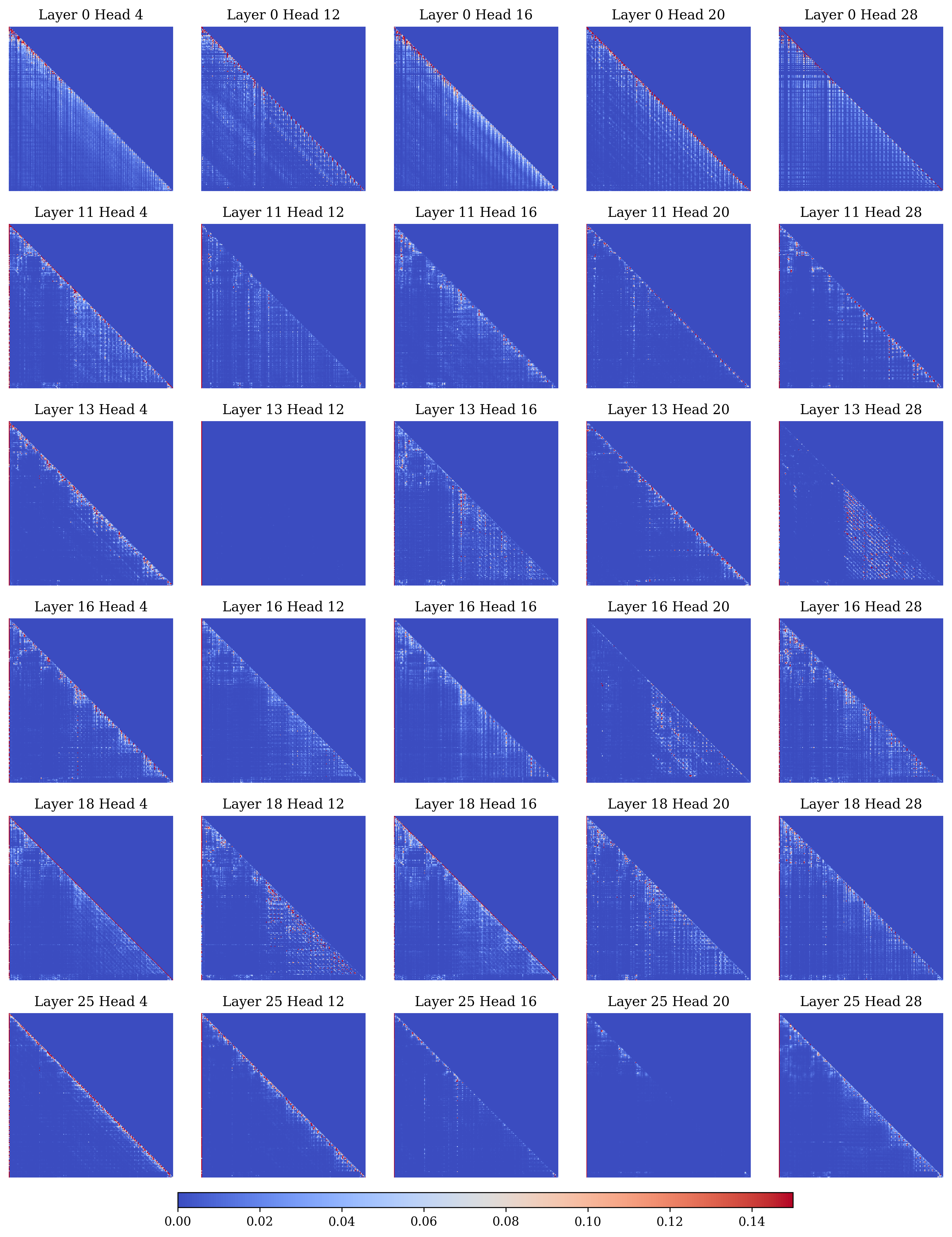}
  \label{fig:qwen3-4}
\end{figure}

\begin{figure}[htbp]
  \centering
  \caption{The attention map of Qwen3-8B.}
  \includegraphics[width=\linewidth]{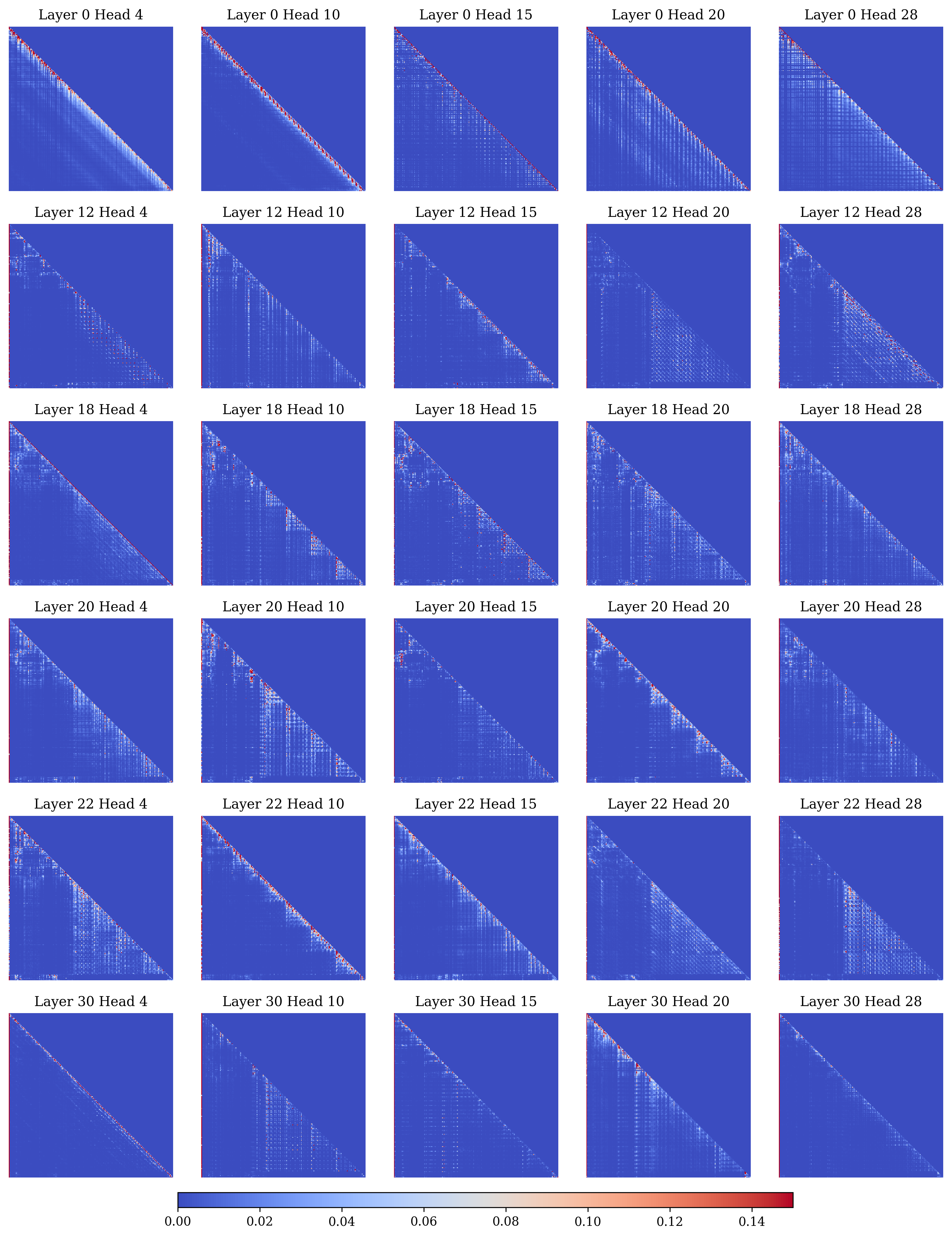}
  \label{fig:qwen3-8}
\end{figure}

\begin{figure}[htbp]
  \centering
  \caption{The attention map of Qwen3-14B.}
  \includegraphics[width=\linewidth]{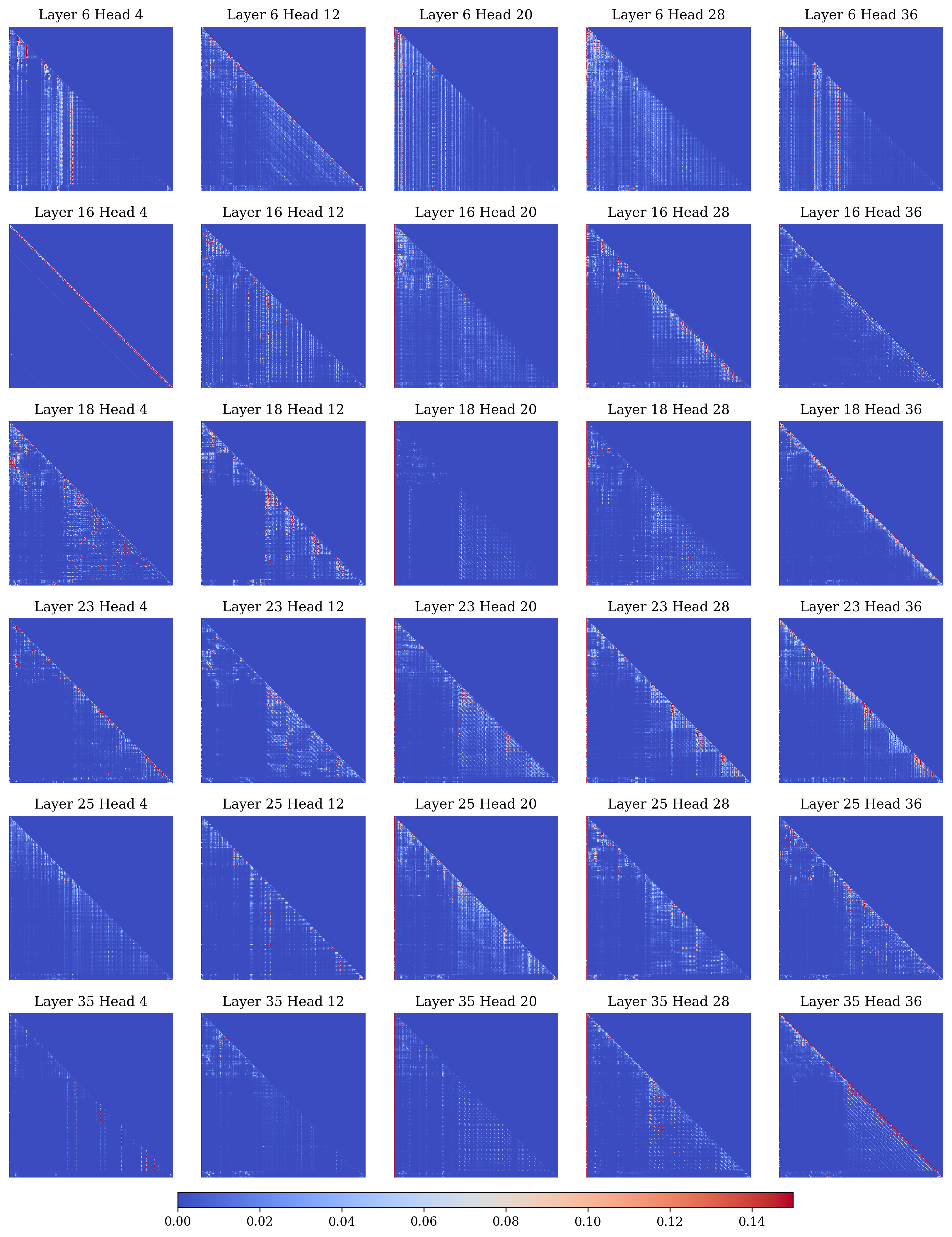}
  \label{fig:qwen3-14}
\end{figure}


\end{document}